\def\eqref#1{equation~\ref{#1}}
\def\1{\bm{1}}
\DeclareMathAlphabet{\mathsfit}{\encodingdefault}{\sfdefault}{m}{sl}
\SetMathAlphabet{\mathsfit}{bold}{\encodingdefault}{\sfdefault}{bx}{n}
\newcommand{\TODO}[1]{}
\definecolor{matplotcyan}{RGB}{0,255,255}
\definecolor{matplotlibgreen}{RGB}{0,128,0}
\definecolor{matplotlibpurple}{RGB}{128,0,128}
\definecolor{matplotliborange}{RGB}{255,165,0}
\newtheorem{theo}{Theorem}[section]
\newtheorem{lem}{Lemma}[section]
\newtheorem{asu}{Assumption}[section]
\newcommand\given{{\mathbin{}\mid\mathbin{}}}
\DeclarePairedDelimiterX\norm[1]\lVert\rVert{\ifblank{#1}{\:\cdot\:}{#1}}
\title{Robust Invariant Representation Learning by Distribution Extrapolation}
\author{\name Kotaro Yoshida \email yoshida.k.0253@m.isct.ac.jp \\
  \addr Department of Information and Communications Engineering\\
  Institute of Science Tokyo
  \AND
  \name Konstantinos Slavakis \email slavakis@ict.eng.isct.ac.jp \\
  \addr Department of Information and Communications Engineering\\
  Institute of Science Tokyo}
\begin{document}
\maketitle

\begin{abstract}
  Invariant risk minimization (IRM) aims to enable out-of-distribution (OOD) generalization in deep learning by learning invariant representations. As IRM poses an inherently challenging bi-level optimization problem, most existing approaches---including IRMv1---adopt penalty-based single-level approximations. However, empirical studies consistently show that these methods often fail to outperform well-tuned empirical risk minimization (ERM), highlighting the need for more robust IRM implementations. This work theoretically identifies a key limitation common to many IRM variants: their penalty terms are highly sensitive to limited environment diversity and over-parameterization, resulting in performance degradation. To address this issue, a novel extrapolation-based framework is proposed that enhances environmental diversity by augmenting the IRM penalty through synthetic distributional shifts. Extensive experiments---ranging from synthetic setups to realistic, over-parameterized scenarios---demonstrate that the proposed method consistently outperforms state-of-the-art IRM variants, validating its effectiveness and robustness.
\end{abstract}

\section{Introduction}\label{ch:introduction}

In modern machine learning applications, the assumption that training and test data are independent
and identically distributed (i.i.d.) often fails to hold, with distribution shifts commonly
observed~\citep{quinonero2022dataset}. In such scenarios, models are required to generalize to
previously unseen data distributions---a challenge known as out-of-distribution (OOD)
generalization. Both theoretical analyses and empirical studies have shown that algorithms built on
the i.i.d.\ assumption, and usually realized via the classical framework of empirical risk
minimization (ERM)~\citep{b1}, tend to perform poorly under distribution shifts~\citep{b2}.

In the context of OOD generalization, invariant risk minimization (IRM)~\citep{b11} has attracted
significant attention. IRM is grounded in invariant representation learning, which aims to learn
only those features from the input data that are consistently useful for inference under
distribution shifts (referred to as invariant features), while discarding all other non-invariant
features (referred to as spurious ones). However, IRM models the learning process as a composition
of a feature extractor and a classifier, leading to a challenging bi-level optimization problem. As
a result, strict implementations of IRM have been found to be practically infeasible~\citep{b11}. To
address this limitation, \citep{b11} proposed an approximation called IRMv1, which reformulates the
original bi-level problem as a more tractable single-level optimization task. Despite numerous IRMv1
variants~\citep{b27, b23, pmlr-v162-zhou22e, b24, b37}, empirical studies have shown
that these approaches often fail to outperform a well-tuned ERM-based strategy~\citep{b29}.

This paper argues that the formulation of IRMv1---widely adopted as the foundation for many recent
IRM variants---leaves room for improvement. Specifically, the loss or penalty function in IRMv1's
single-level optimization task suffers from two key limitations: \textbf{(i)} it is highly dependent
on the diversity of training environments and does not reliably ensure optimality of the original
IRM’s lower-level objective; and \textbf{(ii)} in over-parameterized settings, this dependence is
further amplified, increasing the risk of overfitting to the training distribution.

To address the aforementioned limitations, and drawing inspiration from risk extrapolation
(REx)~\citep{b25}, this paper proposes applying distributional extrapolation to the IRMv1
penalty. Since the original IRMv1 penalty function is inherently non-linear with respect to the data
distribution, it is not directly compatible with distributional extrapolation. To resolve this, the
penalty is reformulated to support extrapolation, ultimately resulting in two novel penalty terms
for IRMv1.

Extensive numerical tests---spanning from small-scale settings based on structural equation models
(SEMs)~\citep{armitage2005encyclopedia} to more practical scenarios involving diverse
computer-vision datasets---demonstrate that the proposed method consistently outperforms
state-of-the-art IRM variants in terms of both accuracy and various calibration metrics. The
implementation of the proposed approach is available at
\url{https://github.com/katoro8989/IRM_Extrapolation}.

To summarize, the key contributions of this work are as follows.
\begin{enumerate}[ label = \textbf{(\roman*)}, itemsep = 0pt, topsep = 0pt ]
\item It is theoretically shown that limited diversity in training environments contributes to the
  suboptimality of the IRMv1 penalty; see \Cref{ch:analysis}.
\item To enhance environment diversity, distributional extrapolation is incorporated into IRMv1,
  leading to two novel penalty terms; see \Cref{ch:method}.
\item The effectiveness of the proposed penalty terms is validated through experiments on synthetic
  SEMs and more realistic, over-parameterized settings using four computer-vision datasets,
  demonstrating superior performance over existing state-of-the-art IRM variants in terms of
  accuracy and calibration metrics; see \Cref{ch:evaluation}.
\item It is further shown that the proposed penalty terms can also serve as effective enhancements
  when integrated into existing IRM variants; see \Cref{ch:evaluation}.
\end{enumerate}

\section{Background}\label{ch:background}

\subsection{Notations}

The model to be trained takes the form of a function $f\colon \mathcal{X} \to \mathcal{Y}$, where
the input space $\mathcal{X} \subset \mathbb{R}^d$ represents the set of input data with dimension
$d$, and the output space $\mathcal{Y} \subset \mathbb{R}$ represents the corresponding set of
ground-truth labels. Data from environment $e \in E$ are assumed to be generated according to a
probability distribution function (PDF) $P_e(x, y)$, where $E$ denotes the set of all possible
environments. For a user-defined loss $\ell \colon \mathcal{Y} \times \mathcal{Y} \to \mathbb{R}$,
for example, $\ell(y^{\prime}, y) \coloneqq (y^{\prime} - y)^2$, $\forall (y^{\prime}, y)\in
\mathcal{Y} \times \mathcal{Y}$, \textit{risk}\/ $R_e(f)$ validates model $f$ according to the
following definition:
\begin{align}\label{eq:risk}
  R_e(f) \coloneqq \mathbb{E}_{(x, y) \sim P_e} \{ \ell(f(x), y) \}\,,
\end{align}
where $\mathbb{E}\{ \cdot\}$ stands for expectation. In the following discussion, it is assumed that
$R_e(f) \geq 0$, $\forall f$.

\subsection{Out-of-distribution generalization}

In machine learning, it is commonly assumed that training and test data are i.i.d. To enhance model
generalization under this assumption, a variety of techniques have been proposed, including weight
decay~\citep{krogh1991simple}, early stopping~\citep{prechelt2002early}, data
augmentation~\citep{perez2017effectivenessdataaugmentationimage}, and ensemble
learning~\citep{Ganaie_2022}. Within the i.i.d.\ framework, empirical risk minimization (ERM) is
widely regarded as one of the most effective approaches for improving generalization. ERM seeks a
model which solves the following task:
\begin{equation}
  \min_{ f\colon \mathcal{X}\to\mathcal{Y} } \sum\nolimits_{ e\in E_{\text{train}} } R_e(f) \,.
  \tag{ERM} \label{eq:erm}
\end{equation}

In the context of deep learning, however, the i.i.d.\ assumption is often violated, and distribution
shifts commonly occur in practice~\citep{quinonero2022dataset}. That is, the training and test data
are drawn from different PDFs. Under such conditions, strategies like ERM---which perform well under
the i.i.d.\ assumption---are known to fail. The task of ensuring model generalization in the
presence of distribution shifts is known as out-of-distribution (OOD) generalization, a challenge
that has gained significant attention in recent years. OOD generalization is typically formulated as
the following $\min$-$\max$ optimization problem~\citep{b2}:
\begin{equation}
  \min_{ f\colon \mathcal{X}\to\mathcal{Y} } \left( R_{\text{OOD}} (f) \coloneqq \max_{e \in E_{
      \text{all}} } R_e(f) \right) \,, \label{eq:ood_loss}
\end{equation}
where $R_{\text{OOD}}$ denotes the worst-case loss across all possible environments, including those
generating the test data. In practice, however, it is rarely feasible to account for all possible
environments during training. As a result, leveraging the limited set of available training
environments effectively to minimize $R_{\text{OOD}}$ becomes a critical objective.

\subsection{Invariant learning}

A central objective in addressing OOD generalization is to ensure that models make consistent
predictions across different environments. This consistency is quantified by the environmental
invariance of model predictions, defined as~\citep{b2}:
\begin{equation}
  \mathbb{E}_{(x, y) \sim P_e} \{y \given f(x)\} = \mathbb{E}_{ (x, y) \sim P_{e^{\prime}} } \{y
  \given f(x)\}\,, \qquad \forall e, e^{\prime} \in E_{ \text{all} } \,, \label{eq:env_invariance}
\end{equation}
that is, the conditional expectation of the label given the model's output remains invariant across
all environments. Intuitively, this implies that the relationship between the model's predictions
and the true labels is preserved regardless of changes in the data distribution.

However, it is important to note that the condition in (\ref{eq:env_invariance}) does not
necessarily guarantee high predictive performance across environments. For instance, a model that
produces random predictions would, by definition, maintain the same random predictive behavior in
every environment and thus satisfy the environmental invariance condition---despite failing to
achieve meaningful accuracy. Furthermore, the well-known lack of environmental invariance in ERM can
be attributed to its tendency to rely not only on invariant features but also on spurious
ones~\citep{b11, sagawa2020distributionallyrobustneuralnetworks}.

\subsection{Invariant risk minimization}

One approach to achieving environmental invariance is through representation learning, where models
are trained to capture only invariant features~\citep{b11, b36}. In this context, let the invariant
feature extractor be defined as $\Phi \colon \mathcal{X} \to \mathcal{H}$, where $\mathcal{H}$ is a
user-defined feature space, and let the predictor (e.g., a classifier) be $\pi \colon \mathcal{H}
\to \mathcal{Y}$. The overall model is then given by the composition $f = \pi \circ \Phi$. Within
this framework, the condition in (\ref{eq:env_invariance}) can be replaced by the following
criterion:
\begin{equation}
  \mathbb{E}_{(x, y) \sim P_e} \{ y \given \Phi(x) \} = \mathbb{E}_{(x, y) \sim P_{e^{\prime}} } \{
  y \given \Phi(x) \}\,, \qquad \forall e, e^{\prime} \in E_{ \text{all} }
  \,. \label{eq:env_invariance_latent}
\end{equation}
The key distinction from (\ref{eq:env_invariance}) is that, in (\ref{eq:env_invariance_latent}),
invariance is enforced with respect to the invariant-feature extractor $\Phi$ rather than the full
model $f$ which depends also on the predictor $\pi$.

Aiming at (\ref{eq:env_invariance_latent}), invariant risk minimization (IRM)~\citep{b11} has been
introduced as the following bi-level optimization task:
\begin{alignat}{2}
  & \min_{ \substack{\Phi\colon \mathcal{X} \to \mathcal{H} \\ \pi\colon \mathcal{H}
      \to \mathcal{Y} } }
  {} && {} \sum\nolimits_{ e\in E_{\text{train}} } R_e (\pi \circ \Phi) \notag \\
  & \text{subject to} \quad
  {} && {} \pi \in \arg\min\nolimits_{ \bar{ \pi} \colon
    \mathcal{H} \to \mathcal{Y} } R_e (\bar{ \pi} \circ \Phi)\,, \forall e\in
  E_{ \text{train} } \,. \label{eq5}\tag{IRM}
\end{alignat}
More specifically, the upper-level problem aims to minimize the sum of losses $R_e$ across all
training environments, similar to ERM. However, unlike ERM, the lower-level problem constrains the
candidate predictor $\pi$ to those that minimize $R_e$ simultaneously across all training
environments. In essence, (\ref{eq5}) targets feature extractors $\Phi$ that facilitate predictors
$\pi$ capable of achieving low loss $R_e$ across all environments, thereby promoting invariant
representation learning through $\Phi$, in line with the criterion
(\ref{eq:env_invariance_latent}). It is worth noting that the lower-level problem is not a
conventional minimization one over an aggregated loss, as is typical in ERM. Instead, it requires
minimizing the loss concurrently across all training environments. This requirement makes the exact
implementation of IRM particularly challenging, motivating the development of various approximation
techniques to make the problem computationally tractable.

\subsection{IRMv1}\label{sec:irmv1}

IRMv1~\citep{b11} is one of the most well-known approximations of IRM. In IRMv1, the feature space
is set as $\mathcal{H} = \mathbb{R}$ in $\Phi \colon \mathcal{X} \to \mathcal{H}$, and the predictor
function $\pi\colon \mathbb{R} \to \mathcal{Y}$ is assumed to be linear. By abuse of notation, this
linear predictor function will henceforth be represented by its slope $\pi \in \mathbb{R}$. In this
context, $f = \pi \circ \Phi$ is reduced to the simple product form $\pi \cdot \Phi$. Since $R_e(
\pi \cdot \Phi ) = R_e(\, (\pi / c) \cdot (c\, \Phi)\, )$ for any nonzero scalar $c$, the problem of
identifying $(\pi, \Phi)$ in (\ref{eq5}) exhibits a scaling ambiguity, that is, if
$(\pi_{\text{opt}}, \Phi_{\text{opt}})$ is a solution of (\ref{eq5}), then $\{\, (\pi_{\text{opt}}
/c, c\, \Phi_{\text{opt}} ) \given c\in \mathbb{R} \setminus\{0\}\, \}$ also solve
(\ref{eq5}). Thus, whenever $\pi_{\text{opt}} \neq 0$, $(1, \pi_{\text{opt}}\cdot
\Phi_{\text{opt}})$ solves (\ref{eq5}). Moreover, when the loss function $R_e(\cdot)$ is convex,
loss $R_e(\pi \cdot \Phi)$ is convex in the scalar $\pi$ for any fixed $\Phi$. As a result, under
the convexity and differentiability of $R_e$, the lower-level optimization in (\ref{eq5}) is
equivalent to ensuring that the gradient with respect to $\pi$ vanishes at the minimizer
$\pi_{\text{opt}}$: $\lvert \nabla_{ \pi \mid \pi = \pi_{ \text{opt} } } R_e(\pi \cdot \Phi)
\rvert^{2} = 0$. Motivated by the aforementioned scaling ambiguity and equivalence, IRMv1 is
formulated as the following single-level approximation of IRM~\citep{b11}:
\begin{equation}
  \min_{\Phi \colon \mathcal{X} \to \mathcal{Y}} \sum\nolimits_{e\in E_{ \text{train} } } \left(\,
  R_e (\Phi) + \lambda\, \lvert \nabla_{ \pi \mid \pi = 1 } R_e( \pi\cdot\Phi) \rvert^2\, \right)
  \,, \tag{IRMv1} \label{eq:irmv1}
\end{equation}
where $\lambda \in \mathbb{R}_{++}$---$\mathbb{R}_{++}$ stands for all positive real numbers---is a
regularization hyperparameter and $R_e(\cdot)$, in general, is not required to be convex but only
differentiable.

IRMv1 has shown promise for out-of-distribution generalization under distribution
shifts~\citep{b11}, and numerous subsequent methods have been developed by building upon its penalty
formulation~\citep{b27, b23, pmlr-v162-zhou22e, b24, b37}. Additional details on these
variants are provided in~\Cref{ch:relatedwork}. Nevertheless, empirical evidence suggests that these
methods frequently fail to outperform a well-tuned ERM baseline~\citep{b29}.

\section{The insufficiency of IRMv1 for invariance guarantees}\label{ch:analysis}

For some $\epsilon \in \mathbb{R}_{++}$, the following relaxation of (\ref{eq:irmv1}) facilitates
the discussion surrounding \Cref{theo:smallEps}, which brings forth the insufficiency of IRMv1 to
identify those feature extractors $\Phi$ that suppress spurious features and promote invariant
ones:
\begin{alignat}{2}
  & \min_{\substack{\Phi \colon \mathcal{X} \to \mathcal{H} \\ \pi \colon
      \mathcal{H} \to \mathcal{Y} } }
  && \sum\nolimits_{ e\in E_{\text{train}} } R_e (\pi \cdot \Phi) \notag\\
  & \text{subject to} \quad &&
   \lvert \nabla_{ \pi } R_e ( \pi \cdot \Phi) \rvert^2 \leq \epsilon\,,
  \forall e \in E_{\text{train}} \,. \label{eq:irmv1_ineq_const}
\end{alignat}

\begin{asu}\label{as:l-smooth}
  In the context of IRMv1 ($\pi$ is considered to be a scalar) and for any $\Phi$, there exists an
  $L_{\Phi} \in \mathbb{R}_{++}$ such that the partial differential operator $\nabla_{ \pi } R_e (
  \pi \cdot \Phi)$ is $L_{\Phi}$-Lipschitz continuous, that is, $\lvert \nabla_{ \pi } R ( \pi_1
  \cdot \Phi) - \nabla_{ \pi } R( \pi_2 \cdot \Phi) \rvert \leq L_{\Phi}\, \lvert \pi_1 - \pi_2
  \rvert$, $\forall \pi_1, \pi_2$.
\end{asu}

\begin{theo}\label{theo:smallEps}
  Presume \Cref{as:l-smooth}. For $\delta \in \mathbb{R}_{++}$, consider the following set of
  parameters $\mathcal{F}_{\delta}$:
  \begin{align*}
    \mathcal{F}_{\delta} \coloneqq \left\{ (\pi, \Phi) \ \bigg| \ \sum\nolimits_{e \in
      E_{\textnormal{train}}} R_e ( \pi \cdot \Phi) \leq \delta \right\} \,.
  \end{align*}
  Choose $\delta$ such that $\mathcal{F}_{\delta} \neq \varnothing$. Then,
  \begin{align*}
    \lvert \nabla_{ \pi } R_e( \pi \cdot \Phi) \rvert^2 \leq 2L_{\Phi} \delta \,,
    \quad \forall (\pi, \Phi) \in \mathcal{F}_{\delta}\,, \forall e \in E_{\textnormal{train}} \,.
  \end{align*}
\end{theo}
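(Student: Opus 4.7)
The plan is to recognize this as a standard consequence of the descent lemma for smooth functions, combined with the nonnegativity of each $R_e$. The key observation is that for a fixed $\Phi$, the map $g_e(\pi) \coloneqq R_e(\pi \cdot \Phi)$ is a scalar-valued function whose derivative is $L_{\Phi}$-Lipschitz by \Cref{as:l-smooth}, and which is nonnegative by the running assumption $R_e \geq 0$. These are exactly the ingredients needed to bound $\lvert g_e'(\pi) \rvert^2$ in terms of $g_e(\pi)$.

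First, I would apply the descent lemma to $g_e$: for any $\pi, \pi' \in \mathbb{R}$,
\begin{align*}
  g_e(\pi') \leq g_e(\pi) + g_e'(\pi)(\pi' - \pi) + \tfrac{L_{\Phi}}{2}(\pi' - \pi)^2 \,.
\end{align*}
Minimizing the right-hand side over $\pi'$, achieved at $\pi' = \pi - g_e'(\pi)/L_{\Phi}$, yields
\begin{align*}
  g_e(\pi') \leq g_e(\pi) - \frac{\lvert g_e'(\pi) \rvert^2}{2 L_{\Phi}} \,.
\end{align*}
Since $g_e(\pi') = R_e((\pi - g_e'(\pi)/L_{\Phi}) \cdot \Phi) \geq 0$, rearrangement gives the per-environment bound $\lvert g_e'(\pi) \rvert^2 \leq 2 L_{\Phi} R_e(\pi \cdot \Phi)$.

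Finally, for any $(\pi, \Phi) \in \mathcal{F}_{\delta}$, nonnegativity of each $R_{e'}$ and the defining inequality $\sum_{e' \in E_{\textnormal{train}}} R_{e'}(\pi \cdot \Phi) \leq \delta$ imply $R_e(\pi \cdot \Phi) \leq \delta$ for every individual $e \in E_{\textnormal{train}}$. Combining with the previous display produces $\lvert \nabla_\pi R_e(\pi \cdot \Phi) \rvert^2 \leq 2 L_{\Phi} \delta$, as claimed.

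There is no real obstacle here; the argument is essentially textbook. The only thing worth being careful about is that the descent lemma is applied to the one-dimensional function $\pi \mapsto R_e(\pi \cdot \Phi)$ at fixed $\Phi$ (so $L_{\Phi}$ is the relevant Lipschitz constant given by \Cref{as:l-smooth}), and that the minimization over $\pi'$ is valid because $\pi$ is a scalar in the IRMv1 setting, ensuring the unconstrained minimum of the quadratic upper bound is attained.
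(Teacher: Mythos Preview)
Your proof is correct and follows essentially the same route as the paper: both use the descent lemma with step size $1/L_{\Phi}$ together with the nonnegativity of $R_e$ to bound the gradient. The only cosmetic difference is that the paper phrases the final step as a proof by contradiction, whereas you argue directly and obtain the slightly sharper intermediate inequality $\lvert \nabla_\pi R_e(\pi\cdot\Phi)\rvert^2 \le 2L_{\Phi} R_e(\pi\cdot\Phi)$ before invoking $R_e(\pi\cdot\Phi)\le\delta$.
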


\begin{proof}
  See Appendix~\ref{ch:appendix1}.
\end{proof}

Theorem~\ref{theo:smallEps} suggests that when the training environments are highly similar and it
is possible to achieve sufficiently low (though not necessarily zero) ERM loss by relying on
spurious features, the optimality constraint in (\ref{eq:irmv1_ineq_const}) may still be
satisfied. In such cases, the gradient penalty term $\lvert \nabla_{ \pi } R_e (\pi \cdot \Phi)
\rvert^2$ may remain small across the training environments $E_{ \text{train} }$, yet become large
in the test environments $E_{\text{test}}$, indicating overfitting to $E_{\text{train}}$. In other
words, the regularization term in (\ref{eq:irmv1}) or the constraint in (\ref{eq:irmv1_ineq_const})
does not provide strong protection against spurious-feature extractors $\Phi$. Specifically, if such
a $\Phi$, in conjunction with some $\pi$, results in a small ERM loss at the upper level of
(\ref{eq:irmv1_ineq_const}), then according to Theorem~\ref{theo:smallEps}, the lower-level
constraint in (\ref{eq:irmv1_ineq_const}) will also be satisfied for some $\epsilon\ ( \geq
2L_{\Phi} \delta)$. As a result, the spurious-feature extractor $\Phi$ may be incorrectly accepted
as an invariant-feature one.  The claim of Theorem~\ref{theo:smallEps} is justified by recent
studies on over-parameterized settings where it has been shown that even when the condition $\lvert
\nabla_{\pi} R_e (\pi \cdot \Phi) \rvert^2 = 0$ is satisfied, the learned feature extractor $\Phi$
may still depend on spurious features~\citep{b27, pmlr-v162-zhou22e}.

\section{Proposed Method}\label{ch:method}

The preceding discussion reveals a fundamental vulnerability of IRMv1, which becomes pronounced in
practical settings where achieving adequate diversity among training environments is inherently
difficult. This section proposes a method to enhance the diversity of the training data
\textit{without}\/ expanding set $E_{\text{train}}$ through data generation. Motivated by
\Cref{ch:analysis}, two novel loss functions are introduced to address the aforementioned
vulnerability of IRMv1.

\subsection{Risk extrapolation}

\Cref{ch:analysis} suggests that increasing the diversity of $E_{\text{train}}$ is essential for the
practical deployment of IRMv1. When $E_{\text{train}}$ exhibits limited diversity and the generation
of synthetic data is not feasible, it becomes desirable to expand $E_{\text{train}}$ with
pseudo-unseen environments and to perform optimization over this augmented set. Inspired by the work
of \citep{b25}, the present study explores the generation of pseudo-unseen environments through
algorithmic design, without the creation of additional data. As observed in \citep{b25}, risk
$R_e(x, y)$ is linear with respect to the PDF $P_e(x, y)$, implying that an affine combination of
risks across different distributions corresponds to the risk associated with a mixture of those
distributions. Leveraging this insight, \citep{b25} proposed a method of loss extrapolation through
affine combinations---negative coefficients are allowed---of the individual environment losses
$R_e(\cdot)$, thereby representing distributions outside the span of the original training
environments. A subsequent $\min$-$\max$ optimization of the risk over this extrapolated space
enabled training with respect to pseudo-unseen environments.

\subsection{Robustifying invariant learning through penalty extrapolation}

The original IRMv1 regularization loss, $\lvert \nabla_{\pi} R_e (\pi \cdot \Phi) \rvert^2$,
involves computing the gradient of the expected loss and taking its squared norm. Consequently, the
loss is nonlinear with respect to the data distribution $P_e(x, y)$. This non-linearity presents
challenges when attempting to extend the extrapolation approach of \citep{b25}, which relies on
linear combinations of risks, to the IRMv1 setting.

To address this non-linearity issue, the following loss is introduced:
\begin{align}
  \mathcal{J}_{\text{IRM}, e}(\pi, \Phi) \coloneqq \mathbb{E}_{(x, y) \sim P_e} \left\{\, \lvert
    \nabla_{ \pi} \ell( \pi \cdot \Phi(x), y) \rvert^2\, \right\} \,, \quad \forall (\pi, \Phi) \,,
    \forall e\in E_{\textnormal{train}} \,. \label{loss.J}
\end{align}
In contrast to $\lvert \nabla_{\pi} R_e (\pi \cdot \Phi) \rvert^2$---recall that $R_e (\pi \cdot
\Phi) = \mathbb{E}_{(x,y)\sim P_e} \left\{ \ell( \pi \cdot\Phi(x), y) \right \}$---notice that the
squared norm of the gradient is computed prior to taking the expectation under $P_e(x,y)$ in
(\ref{loss.J}). In this way, the loss in (\ref{loss.J}) is linear with respect to the PDF of the
data, which in turn facilitates the implementation of the extrapolation ideas of \citep{b25}. The
following shows that $\mathcal{J}_{\text{IRM}, e}(\cdot, \cdot)$ ``majorizes'' $\lvert \nabla_{\pi}
R_e (\cdot, \cdot) \rvert^2$.

\begin{lem}
  For any risk function $R_e (\pi \cdot \Phi) = \mathbb{E}_{(x,y)\sim P_e} \left\{ \ell( \pi
  \cdot\Phi(x), y) \right \}$, the following holds true:
  \begin{align*}
    \lvert \nabla_{\pi} R_e (\pi \cdot \Phi) \rvert^2 \leq \mathcal{J}_{\textnormal{IRM}, e} (\pi,
    \Phi) \,, \quad \forall (\pi, \Phi)\,, \forall e\in E_{\textnormal{train}} \,.
  \end{align*}
\end{lem}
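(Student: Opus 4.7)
The plan is to reduce the inequality to a direct application of Jensen's inequality (equivalently, the Cauchy--Schwarz inequality for the $L^2(P_e)$ inner product) after interchanging the gradient operator with the expectation.

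First, assuming the standard regularity conditions that allow differentiation under the integral sign (dominated convergence applied to $\ell(\pi \cdot \Phi(x), y)$ in a neighborhood of $\pi$, which is implicit in the differentiability used throughout the paper), I would write
\begin{align*}
  \nabla_{\pi} R_e(\pi \cdot \Phi) = \nabla_{\pi} \mathbb{E}_{(x,y) \sim P_e}\{\ell(\pi \cdot \Phi(x), y)\} = \mathbb{E}_{(x,y) \sim P_e}\{\nabla_{\pi} \ell(\pi \cdot \Phi(x), y)\} \,.
\end{align*}

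Next, taking squared absolute value on both sides and invoking Jensen's inequality with the convex map $t \mapsto t^2$ (equivalently, Cauchy--Schwarz with the constant function $1$),
\begin{align*}
  \lvert \nabla_{\pi} R_e(\pi \cdot \Phi) \rvert^2 = \lvert \mathbb{E}_{(x,y) \sim P_e}\{ \nabla_{\pi} \ell(\pi \cdot \Phi(x), y) \} \rvert^2 \leq \mathbb{E}_{(x,y) \sim P_e}\{ \lvert \nabla_{\pi} \ell(\pi \cdot \Phi(x), y) \rvert^2 \} \,,
\end{align*}
and the right-hand side is precisely $\mathcal{J}_{\text{IRM}, e}(\pi, \Phi)$ by definition in (\ref{loss.J}), which closes the argument for arbitrary $(\pi, \Phi)$ and $e \in E_{\text{train}}$.

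There is no real obstacle here; the only subtlety is the Leibniz-type swap of $\nabla_{\pi}$ and $\mathbb{E}_{(x,y) \sim P_e}$, which in turn needs a dominating integrable function for $\nabla_{\pi} \ell(\pi \cdot \Phi(x), y)$ locally in $\pi$. Since the paper already treats $R_e$ as differentiable with respect to $\pi$ and takes expectations inside and outside gradients without comment, I would simply state that this interchange holds under the standing regularity, and then the inequality is a one-line consequence of Jensen. The lemma can therefore be dispatched in a few lines, which matches the ``majorization'' reading suggested in the surrounding text.
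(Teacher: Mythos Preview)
Your proof is correct and essentially identical to the paper's own argument: the paper also interchanges $\nabla_\pi$ with the expectation by invoking the dominated convergence theorem, then applies Jensen's inequality to obtain the bound. There is nothing to add.
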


\begin{proof}
  See Appendix~\ref{sec:proof_lemma41}.
\end{proof}

For notational convenience, and motivated by the discussion in \Cref{sec:irmv1}, let
\begin{align*}
  \mathcal{J}_{\text{IRMv1}, e} (\Phi) \coloneqq \mathcal{J}_{\text{IRM}, e} (1.0, \Phi)\,, \quad
  \forall \Phi\,, \forall e\in E_{\textnormal{train}} \,.
\end{align*}
Based on $\mathcal{J}_{\text{IRMv1}, e}$, this work proposes the following losses
(\ref{eq:mm-irmv1}) and (\ref{eq:v-irmv1}) instead of $\lvert \nabla_{\pi} R_e (\pi \cdot \Phi)
\rvert^2$ in (\ref{eq:irmv1}). First in order is the loss
\begin{alignat}{2}
  \mathcal{C}_{ \text{mm} }(\Phi)
  & {} \coloneqq {} && \max_{ (\alpha_e)_{ e\in E_{\text{train}} } \in \mathcal{A} }
  \sum\nolimits_{\ e \in E_{\text{train}} } \alpha_e\, \mathcal{J}_{\text{IRMv1}, e} (\Phi)
  \notag\\
  & {} = {} && (\, 1 - \alpha_{\min}\, \lvert E_{ \text{train} } \rvert\, )\, \max\nolimits_{e \in
    E_{\text{train}} } \mathcal{J}_{\text{IRMv1}, e} (\Phi) + \alpha_{ \text{min} } \sum\nolimits_{e
    \in E_{ \text{train} } } \mathcal{J}_{\text{IRMv1}, e} (\Phi) \,, \label{eq:mm-irmv1}
\end{alignat}
where $\bm{\alpha} \coloneqq (\alpha_e)_{ e\in E_{\text{train}} }$ stands for a tuple of length
equal to the cardinality $\lvert E_{ \text{train} } \rvert$ of $E_{ \text{train} }$,
\begin{align}
  \mathcal{A} \coloneqq \left \{\, \bm{\alpha} \in \mathbb{R}^{ \lvert E_{
      \text{train} } \rvert }\, \given \sum\nolimits_{ e \in E_{\text{train}} } \alpha_e = 1\,,\,
  \alpha_{\min} \leq \alpha_e\, \right \} \label{constraint.A}
\end{align}
is the intersection of the affine set $\{\, \bm{\alpha} \in \mathbb{R}^{ \lvert E_{ \text{train} }
  \rvert } \given \sum_{ e \in E_{\text{train}} } \alpha_e = 1\, \}$ with the bound constraints
$\{\, \bm{\alpha} \in \mathbb{R}^{ \lvert E_{ \text{train} } \rvert } \given \alpha_{\min} \leq
\alpha_e\, \}$, and the user-defined parameter $\alpha_{\min}$, which \textit{may take negative
  values,} dictates the extent of extrapolation. The proof of the second equality in
(\ref{eq:mm-irmv1}) is provided in Appendix~\ref{ch:appendix_mm_proof}. The intuition behind
(\ref{eq:mm-irmv1}) is to select, from a family of induced extrapolated pseudo-unseen environments,
the distribution that maximizes the penalty. This strategy effectively simulates a broader set of
training environments---without the need for synthetic or augmented data---thereby helping to
mitigate overfitting to spurious features, even when the diversity of training environments is
limited.

Furthermore, \citep{b25} mentions that, in addition to extrapolation, simply adding the variance of
risks across training environments as a regularization term proves to be stable and effective. To
this end, the following alternative loss is also proposed:
\begin{align}
  \mathcal{C}_{\text{v}}(\Phi) \coloneqq \gamma \cdot \text{Var}
  (\, \{\mathcal{J}_{\text{IRMv1}, e} (\Phi) \given e \in E_{\text{train}} \}\, ) +
  \sum\nolimits_{e\in E_{ \text{train} } } \mathcal{J}_{\text{IRMv1}, e} (\Phi)
  \,, \label{eq:v-irmv1}
\end{align}
where $\mathrm{Var}(S)$ represents the empirical variance over a finite set $S$ of real values,
defined by
\begin{align*}
  \mathrm{Var}(S) \coloneqq \frac{1}{ \lvert S\rvert } \sum\nolimits_{s\in S} \left( s - \bar s
  \right)^2 \,, \quad \bar{s} \coloneqq \frac{1}{ \lvert S\rvert } \sum\nolimits_{s\in S} s \,,
\end{align*}
and $\gamma$ is a non-negative scalar that serves as a hyperparameter to determine the extent of
regularization on the variance of the penalty.

To summarize, the following tasks, based on (\ref{eq:mm-irmv1}) and (\ref{eq:v-irmv1}), are proposed
as alternatives to the popular (\ref{eq:irmv1}):
\begin{align}
  & \min_{ \Phi \colon \mathcal{X} \to \mathcal{Y} } \sum\nolimits_{ e \in E_{ \text{train} } } R_e
  (\Phi) + \lambda\, \mathcal{C}_{\text{mm}} (\Phi) \,, \label{eq:mm-irmv1_loss} \tag{mm-IRMv1} \\
  & \min_{ \Phi \colon \mathcal{X} \to \mathcal{Y} } \sum\nolimits_{ e \in E_{ \text{train} } } R_e
  (\Phi) + \lambda\, \mathcal{C}_\text{v} (\Phi) \,, \label{eq:v-irmv1_loss} \tag{v-IRMv1}
\end{align}
where $\lambda\in \mathbb{R}_{++}$ is a user-defined regularization parameter.

\section{Numerical Tests}\label{ch:evaluation}

\subsection{Structural equation models}\label{subsec:non-over}

Here, we conduct experiments using structural equation models (SEMs) in a scenario where the
training environments are highly similar to each other, so that spurious features can misleadingly
reduce the training loss.

\subsubsection{Setting}

The following SEM, taken from \citep{b11}, is considered:
\begin{alignat}{3}
  \mathbf{x}^{ \text{inv} }_e & \sim && \mathcal{N}( \mathbf{0}_d, e^2 \mathbf{I}_d ) \,, \notag \\
  y_e & {} \coloneqq {} && \mathbf{1}_d^{\top} \mathbf{x}^{ \text{inv} }_e + u \,, \quad && u \sim
  \mathcal{N}(0, I_1) \,, \notag \\
  \mathbf{x}^{ \text{spu} }_e & \coloneqq && y_e \cdot \mathbf{1}_d + \mathbf{v}_e \,,
  \quad && \mathbf{v}_e \sim \mathcal{N}( \mathbf{0}_d, e^2 \mathbf{I}_d) \,. \label{sem.spu}
\end{alignat}
where $\mathbf{x}^{ \text{inv} }_e, \mathbf{x}^{ \text{spu} }_e$ and $y_e$ are $d\times 1$
vector-valued and scalar-valued realizations of random variables, respectively, $\mathcal{N}(
\mathbf{0}, e^2 \mathbf{I}_d )$ stands for the $d$-dimensional normal PDF with mean the $d\times 1$
all-zero vector $\mathbf{0}_d$ and covariance matrix $e^2 \mathbf{I}_d$, $\mathbf{I}_d$ is the
$d\times d$ identity matrix, $\mathbf{1}_d$ is the $d\times 1$ all-one vector, and $\top$ denotes
vector/matrix transposition. Each environment is uniquely characterized by a distinct real value
$e$.

The following estimation task is considered: given the $2d \times 1$ vector $\mathbf{x}_e \coloneqq
[ \mathbf{x}_e^{\text{inv} \top}, \mathbf{x}_e^{\text{spu} \top}]^{\top}$, estimate $y_e$ by
$\hat{y}_e \coloneqq \hat{\mathbf{w}}_{\text{inv}}^{\top } \mathbf{x}_e^{\text{inv}} +
\hat{\mathbf{w}}_{\text{spu}}^{\top } \mathbf{x}_e^{\text{spu}}$, where the $d\times 1$ parameter
vectors $\hat{\mathbf{w}}_{\text{inv}}, \hat{\mathbf{w}}_{\text{spu}}$ of the estimation model need
to be identified. In the case where $(\hat{\mathbf{w}}_{\text{inv}}, \hat{\mathbf{w}}_{\text{spu}})
= ( \mathbf{1}_d, \mathbf{0}_d)$, that is, estimation is based solely on the invariant feature
$\mathbf{x}_e^{\text{inv}}$, then the estimation error $y_e - \hat{y}_e = y_e - \mathbf{1}_d^{\top}
\mathbf{x}_e^{\text{inv}} = u \sim \mathcal{N}(0, 1)$ becomes environment invariant, so that
$\mathbb{E}\{ y_e - \hat{y}_e \} = 0$ and $\mathbb{E}\{ ( y_e - \hat{y}_e )^2 \} = 1$, regardless
of the value of $e$. On the other hand, any attempt to utilize the spurious $\mathbf{x}^{ \text{spu}
}_e$ in the estimation model by using a non-zero $\hat{\mathbf{w}}_{\text{spu}}$ renders the
estimation error environment dependent due to (\ref{sem.spu}).

Tests are conducted for $d = 5$. To measure invariance, the causal error $(1/d) \norm{
  \hat{\mathbf{w}}_{\text{inv}} - \mathbf{1}_d }^2$ and the non-causal one $(1/d) \norm{
  \hat{\mathbf{w}}_{\text{spu}} - \mathbf{0}_d }^2$ are employed \citep{b11}. Set $E_{\text{train}}$
comprises two environments, and three settings are investigated for $E_{\text{train}}$: $\{0.2, 2\},
\{0.2, 1\}, \{0.2, 0.6\}$. As the two values of $e$ are getting closer to each other,
$\mathbf{x}^{\text{spu}}_e$ becomes spuriously invariant within $E_{\text{train}}$, making it
increasingly difficult to identify $\mathbf{x}^{\text{inv}}_e$. 
The model was trained using the mean squared error (MSE) loss function, and hyperparameter tuning was performed based on validation data within the training environment. Further details are provided in Section~\ref{subsec:sem_settings}.

\subsubsection{Results}

Results are shown in \Cref{tab:sem_result,tab:sem_result_others}. A common trend observed across all
methods is that as the $e$ values in the training environments become more similar, the error rates
increase, making it more challenging to achieve invariance. However, while IRMv1 experiences
particularly severe performance degradation, the proposed methods exhibit consistent improvements
across all settings. Notably, mm-IRMv1 achieves the most substantial gains, with reductions of up to
approximately 72\% in causal error and 56\% in non-causal error. Results under settings with a
larger number of training environments but limited diversity are presented in
\Cref{tab:sem_result_others}. Even under these conditions, the proposed methods consistently
outperform IRMv1. These findings confirm---from the perspective of learned parameters---that
extrapolating the IRMv1 penalty fosters invariant learning, even when training environment diversity
is limited.

\begin{table}[ht]
  \caption{Invariance errors in SEMs. Even in scenarios where the training environments are
    similar---making it difficult to eliminate spurious features---the proposed methods,
    particularly mm-IRMv1, consistently achieve substantial improvements over the IRMv1
    baseline. Percentages indicate performance changes relative to IRMv1.}

  \renewcommand{\arraystretch}{1.2}
  \resizebox{\textwidth}{!}{%
    \begin{tabular}{l|ll|ll|ll}
      \toprule
      & \multicolumn{2}{c|}{$E_{\text{train}}=\{0.2,2\}$}
      & \multicolumn{2}{c|}{$E_{\text{train}}=\{0.2,1\}$}
      & \multicolumn{2}{c}{$E_{\text{train}}=\{0.2,0.6\}$}\\
      & causal err ($\downarrow$) & non‑causal err ($\downarrow$)
      & causal err ($\downarrow$) & non‑causal err ($\downarrow$)
      & causal err ($\downarrow$) & non‑causal err ($\downarrow$)\\
      \midrule
      IRMv1
      & 0.487 $\pm$ 0.840 & 0.205 $\pm$ 0.351
      & 0.798 $\pm$ 0.152 & 0.464 $\pm$ 0.026
      & 1.418 $\pm$ 0.091 & 0.686 $\pm$ 0.068\\
      \rowcolor{gray!15}
      v-IRMv1 (Ours)
      & \textbf{0.414 $\pm$ 0.650} {\color{green}(-15.0\%)}   
      & 0.218 $\pm$ 0.330 {\color{red}(+6.3\%)}            
      & \textbf{0.503 $\pm$ 0.137} {\color{green}(-36.9\%)} 
      & \textbf{0.360 $\pm$ 0.082} {\color{green}(-22.4\%)} 
      & \textbf{1.151 $\pm$ 0.019} {\color{green}(-18.8\%)} 
      & \textbf{0.581 $\pm$ 0.027} {\color{green}(-15.3\%)} \\
      \rowcolor{gray!15}
      mm-IRMv1 (Ours)
      & \textbf{0.218 $\pm$ 0.373} {\color{green}(-55.2\%)} 
      & \textbf{0.131 $\pm$ 0.224} {\color{green}(-36.1\%)} 
      & \textbf{0.222 $\pm$ 0.059} {\color{green}(-72.2\%)} 
      & \textbf{0.206 $\pm$ 0.059} {\color{green}(-55.6\%)} 
      & \textbf{1.006 $\pm$ 0.311} {\color{green}(-29.1\%)} 
      & \textbf{0.564 $\pm$ 0.153} {\color{green}(-17.8\%)} \\
      \bottomrule
    \end{tabular}
    \label{tab:sem_result}
  }
\end{table}

\begin{table}[t]
  \centering
  \renewcommand{\arraystretch}{0.9}
  \caption{Test accuracy (\%) is reported for the four datasets individually, along with their
    average. For each method, ``base'' refers to the original IRM variant, while ``mm'' and ``v''
    denote the versions that incorporate the penalties defined in (\ref{eq:mm-irmv1}) and
    (\ref{eq:v-irmv1}), respectively. On average, the proposed approach enhances performance across
    all IRM variants, with the ``v'' variant consistently outperforming the original. Percentages
    indicate performance changes relative to the baseline method.}
  \label{tab:acc}
  \resizebox{1.0\textwidth}{!}{
    \tiny
    \begin{tabular}{lccccc|l}
      \toprule
      Method & {} & {CMNIST} & {CFMNIST} & {PACS} & {VLCS} & \textbf{Avg.}\\
      \midrule
      ERM &  & 30.9$\pm$0.6 & 28.4$\pm$0.1 & 76.7$\pm$0.6 & 57.7$\pm$0.3 & 48.4\\
      \midrule
      & base & 64.7$\pm$0.5 & 74.3$\pm$1.1 & 75.5$\pm$1.4 & 58.4$\pm$0.5 & 68.2\\
      \rowcolor{gray!15} \multicolumn{1}{>{\cellcolor{white}}l}{IRMv1}  & v & {68.1$\pm$0.4} &
               {74.8$\pm$0.5} & {75.9$\pm$3.8} & 58.4$\pm$2.1 & \textbf{69.3}
               {\color{green}(+1.6\%)} \\
      \rowcolor{gray!15} \multicolumn{1}{>{\cellcolor{white}}l}{}  & mm & {66.8}$\pm$0.6 &
      73.5$\pm$0.4 & 72.7$\pm$3.9 & {59.0$\pm$1.1} & 68.0 {\color{red}(-0.3\%)}\\
      \midrule
      & base & 66.5$\pm$0.5 & 75.5$\pm$0.9 & 76.3$\pm$2.2 & 56.7$\pm$1.9 & 68.8\\
      \rowcolor{gray!15} \multicolumn{1}{>{\cellcolor{white}}l}{BIRM}  & v & {69.0$\pm$0.5} &
               {75.9$\pm$0.7} & {76.5$\pm$3.1} & 56.6$\pm$1.8 & \textbf{69.5}
               {\color{green}(+1.0\%)}\\
      \rowcolor{gray!15} \multicolumn{1}{>{\cellcolor{white}}l}{}  & mm & 66.2$\pm$0.6 &
      73.4$\pm$0.2 & 73.4$\pm$0.2 & 56.7$\pm$0.6 & 67.6 {\color{red}(-1.7\%)}\\
      \midrule
      & base & 67.2$\pm$0.4 & 67.3$\pm$2.2 & 71.3$\pm$1.9 & 51.2$\pm$5.4 & 64.2\\
      \rowcolor{gray!15} \multicolumn{1}{>{\cellcolor{white}}l}{BLO}  & v & 67.0$\pm$1.9 &
               {70.5$\pm$2.1} & {71.4$\pm$4.6} & {53.8$\pm$5.4}& \textbf{65.7}
               {\color{green}(+2.3\%)}\\
      \rowcolor{gray!15} \multicolumn{1}{>{\cellcolor{white}}l}{}  & mm & {67.9$\pm$0.5} &
               {71.2$\pm$1.8} & 69.4$\pm$1.2 & {53.4$\pm$6.0} & \textbf{65.5}
               {\color{green}(+2.0\%)}\\
      \bottomrule
    \end{tabular}
  }
\end{table}

\subsection{Vision Datasets}

The proposed framework is evaluated in the more typical and over-parameterized deep-learning
scenarios for computer vision.

\subsubsection{Setting}

The proposed methods are evaluated on four vision classification datasets: Colored MNIST (CMNIST)
\citep{b2}, Colored FashionMNIST (CFMNIST) \citep{b26}, PACS \citep{b31}, and VLCS
\citep{torralba2011unbiased}. Out-of-distribution (OOD) datasets generally differ in the level of
generalization difficulty, depending on the nature of the distributional shift. Specifically, CMNIST
and CFMNIST exhibit correlation shifts, while PACS and VLCS are characterized by diversity shifts
\citep{ye2022ood}. A ResNet-18 model \citep{he2016deep} is employed as the feature
extractor $\Phi$, whose parameters need to be identified, across all datasets.

The evaluation metrics include accuracy, expected calibration error (ECE), and adapted ECE (ACE) \citep{nixon2019measuring}, all computed on
the test environments. While accuracy measures predictive performance, ECE and ACE specifically
assess the calibration of the model's confidence---that is, how well the predicted confidence aligns
with actual correctness. These calibration metrics provide a more nuanced understanding of model
performance. Recent studies have shown that confidence calibration across multiple environments
guarantees the optimality of IRM \citep{wald2021on}, and their empirical connections have been
demonstrated by several works \citep{ovadia2019can, immer2021scalable, yoshida2024towards, naganuma2023empirical}.

The competing methods include empirical risk minimization (ERM), IRMv1, and two recent
state-of-the-art IRM variants: Bayesian IRM (BIRM) \citep{b27} and BLOC-IRM (BLO) \citep{b37}. Since
BIRM and BLO are based on the losses $\mathcal{J}_{\text{IRMv1}, e}(\Phi)$ and
$\mathcal{J}_{\text{IRM}, e}(\Phi)$, respectively, they are compatible with the proposed
extrapolation method. Accordingly, in addition to IRMv1, the effectiveness of the extrapolation
approach is also evaluated when integrated with BIRM and BLO.

Additional details can be found in \Cref{ch:appendix2}.

\begin{table}[t]
  \centering
  \renewcommand{\arraystretch}{0.9}
  \caption{Test ECE is reported for the four datasets and their average. For each method, ``base''
    denotes the original IRM variant, while ``mm'' and ``v'' refer to the versions incorporating the
    proposed penalties defined in (\ref{eq:mm-irmv1}) and (\ref{eq:v-irmv1}), respectively. On
    average, the proposed approach improves calibration performance across all IRM
    variants. Percentages indicate performance changes relative to the baseline method.}
  \label{tab:ece}
  \resizebox{1.0\textwidth}{!}{
    \tiny
    \begin{tabular}{llcccc|l}
      \toprule
      Method & {} & {CMNIST} & {CFMNIST} & {PACS} & {VLCS} & \textbf{Avg.}\\
      \midrule
      ERM &  & 57.9$\pm$1.1 & 54.4$\pm$0.3 & 12.8$\pm$1.0 & 22.1$\pm$0.1 & 36.8\\
      \midrule
      & base & 10.4$\pm$0.3 & 17.8$\pm$0.6 & 14.7$\pm$1.5 & 22.8$\pm$0.2 & 16.4\\
      \rowcolor{gray!15} \multicolumn{1}{>{\cellcolor{white}}l}{IRMv1}  & v & 10.5$\pm$0.5 &
      17.9$\pm$1.0 & {12.9$\pm$2.8} & 23.0$\pm$1.3 & \textbf{16.1} {\color{green}(-1.8\%)}\\
      \rowcolor{gray!15} \multicolumn{1}{>{\cellcolor{white}}l}{}  & mm & {10.5}$\pm$0.9 &
      13.8$\pm$0.6 & 16.6$\pm$3.6 & {19.8$\pm$0.6} & \textbf{15.2} {\color{green}(-7.3\%)}\\
      \midrule
      & base & 10.1$\pm$0.9 & 19.6$\pm$0.7 & {12.9$\pm$1.3} & 22.8$\pm$1.0 & 16.3\\
      \rowcolor{gray!15} \multicolumn{1}{>{\cellcolor{white}}l}{BIRM}  & v & {10.0$\pm$0.4} &
               {19.4$\pm$0.7} & 13.5$\pm$2.8 & 23.5$\pm$0.2 & 16.6 {\color{red}(+1.8\%)}\\
      \rowcolor{gray!15} \multicolumn{1}{>{\cellcolor{white}}l}{}  & mm & 9.5$\pm$0.4 &
               {13.4$\pm$0.6} & 13.4$\pm$0.6 & {22.9$\pm$1.5} & \textbf{15.3}
               {\color{green}(-6.1\%)}\\
      \midrule
      & base & 13.7$\pm$0.2 & 10.4$\pm$0.7 & 18.4$\pm$2.7 & 17.8$\pm$3.6 & 15.1\\
      \rowcolor{gray!15} \multicolumn{1}{>{\cellcolor{white}}l}{BLO}  & v & {11.6$\pm$1.7} &
               {11.6$\pm$1.7} & 18.6$\pm$6.0 & {15.6$\pm$1.9}& \textbf{14.7}
               {\color{green}(-2.6\%)}\\
      \rowcolor{gray!15} \multicolumn{1}{>{\cellcolor{white}}l}{}  & mm & {13.0$\pm$0.2} &
               {13.7$\pm$2.3} & 19.0$\pm$2.3 & {14.1$\pm$2.9} & \textbf{14.9}
               {\color{green}(-1.3\%)} \\
      \bottomrule
    \end{tabular}
  }
\end{table}

\begin{table}[t]
  \centering
  \renewcommand{\arraystretch}{0.9}
  \caption{Test adapted calibration error (ACE) is reported for the four datasets, along with their
    average. For each method, ``base'' denotes the original IRM variant, while ``mm'' and ``v''
    represent the variants incorporating the proposed penalties defined in (\ref{eq:mm-irmv1}) and
    (\ref{eq:v-irmv1}), respectively. On average, the proposed approach improves calibration
    performance across all IRM variants. Percentages indicate performance changes relative to the
    baseline method.}
  \label{tab:ace}
  \resizebox{1.0\textwidth}{!}{
    \tiny
    \begin{tabular}{llcccc|l}
      \toprule
      Method & {} & {CMNIST} & {CFMNIST} & {PACS} & {VLCS} & \textbf{Avg.}\\
      \midrule
      ERM &  & 57.8$\pm$1.1 & 54.5$\pm$0.3 & 12.3$\pm$0.7 & 22.0$\pm$0.1 & 36.6\\
      \midrule
      & base & 14.0$\pm$0.3 & 20.2$\pm$0.1 & 14.1$\pm$1.7 & 22.6$\pm$0.1 & 17.7\\
      \rowcolor{gray!15} \multicolumn{1}{>{\cellcolor{white}}l}{IRMv1}  & v & {14.0$\pm$0.4} &
      20.2$\pm$0.3 & {12.4$\pm$2.6} & 22.6$\pm$1.4 & \textbf{17.3} {\color{green}(-2.3\%)}\\
      \rowcolor{gray!15} \multicolumn{1}{>{\cellcolor{white}}l}{}  & mm & {13.8}$\pm$1.0 &
               {17.7$\pm$0.5} & 16.1$\pm$3.7 & {19.9$\pm$0.7} &
               \textbf{16.9}{\color{green}(-4.5\%)}\\
      \midrule
      & base & 13.8$\pm$1.0 & 21.0$\pm$0.4 & {12.4$\pm$1.1} & 22.9$\pm$0.9 & 17.5\\
      \rowcolor{gray!15} \multicolumn{1}{>{\cellcolor{white}}l}{BIRM}  & v & {13.3$\pm$0.5} &
               {20.7$\pm$0.5} & 13.2$\pm$2.8 & 23.1$\pm$0.4 & 17.6 {\color{red}(+0.6\%)}\\
      \rowcolor{gray!15} \multicolumn{1}{>{\cellcolor{white}}l}{}  & mm & 13.0$\pm$0.0 &
               {16.7$\pm$0.5} & 12.4$\pm$1.1 & {23.0$\pm$1.6} & \textbf{16.8}
               {\color{green}(-4.0\%)}\\
      \midrule
      & base & 16.5$\pm$0.2 & 14.9$\pm$0.3 & 17.9$\pm$2.7 & 19.1$\pm$3.5 & 17.1\\
      \rowcolor{gray!15} \multicolumn{1}{>{\cellcolor{white}}l}{BLO}  & v & {14.6$\pm$1.3} &
               {14.6$\pm$1.3} & 18.0$\pm$6.2 & {16.7$\pm$1.7}& \textbf{16.5}
               {\color{green}(-3.5\%)}\\
      \rowcolor{gray!15} \multicolumn{1}{>{\cellcolor{white}}l}{}  & mm & {15.6$\pm$0.2} &
               {17.2$\pm$2.2} & 18.4$\pm$2.3 & {15.6$\pm$2.6} & \textbf{16.7}
               {\color{green}(-2.3\%)}\\
      \bottomrule
    \end{tabular}
  }
\end{table}

\subsubsection{Results}

\textbf{Accuracy.} \Cref{tab:acc} presents the test accuracy of each method across the four
datasets, along with their averages. On average, at least one of the proposed variants consistently
outperforms the original method for every IRM variant. Notably, the ``v'' variant consistently
surpasses the original methods. Compared to the best-performing original baseline (the base variant
of BIRM at 68.8\%), the proposed approach achieves an improvement of 1.0\% with
the v-BIRM variant.

\textbf{Calibration Metrics.} Next, we show the results for calibration metrics in
\Cref{tab:ece,tab:ace}. Specifically, \Cref{tab:ece} reports results for ECE, and \Cref{tab:ace} for
ACE. Across both metrics, our extrapolation methods consistently improve upon each original
variant. The improvement in calibration metrics suggests that our distributional extrapolation
approach effectively mitigates the overconfidence observed in recent neural
networks~\citep{minderer2021revisiting}. However, unlike the comparison based on accuracy, we did
not observe a consistent trend regarding the superiority of ``v'' or ``mm''. Notably, when focusing
solely on IRMv1, the variant ``v'' consistently outperformed the original one, aligning with the
trend observed in accuracy.

\textbf{Extrapolation Mitigates IRMv1 Overfitting.} Furthermore, the study examines whether the
IRMv1 penalty term leads to overfitting to the training environment, as discussed in
\Cref{ch:analysis}, and whether the proposed methods effectively mitigate this issue using the
CMNIST dataset. \Cref{fig:vertical_images} presents scatter plots showing the relationships between
the IRMv1 penalty values in the training environment and various evaluation metrics. In these plots,
blue circles represent the original IRMv1, green squares denote v-IRMv1, and red triangles
correspond to mm-IRMv1. Each point reflects the recorded values at each epoch during the final 50
epochs for each method.

Notably, the penalty values for the original IRMv1 approached zero over the final 50 epochs;
however, its performance on test environment metrics was inferior to that of the proposed
methods. This finding supports the vulnerability of IRMv1 discussed in \Cref{ch:analysis}. In
contrast, while the penalty values for the proposed methods did not decline as sharply as those of
IRMv1, they consistently achieved superior performance across all test metrics, indicating improved
out-of-distribution generalization. These results demonstrate that the proposed distribution
extrapolation approach effectively prevents overfitting to the penalty in the training environment,
even in scenarios with limited environmental diversity such as CMNIST, thereby promoting the
learning of truly invariant features.

Additional visualizations and scatter plots on other IRM variants can be found in
\Cref{sec:additional_penalty_relation}

\begin{figure}[t]
  \vspace{-3mm}
  \centering
  \begin{subfigure}[b]{0.33\linewidth}
    \includegraphics[width=\linewidth]{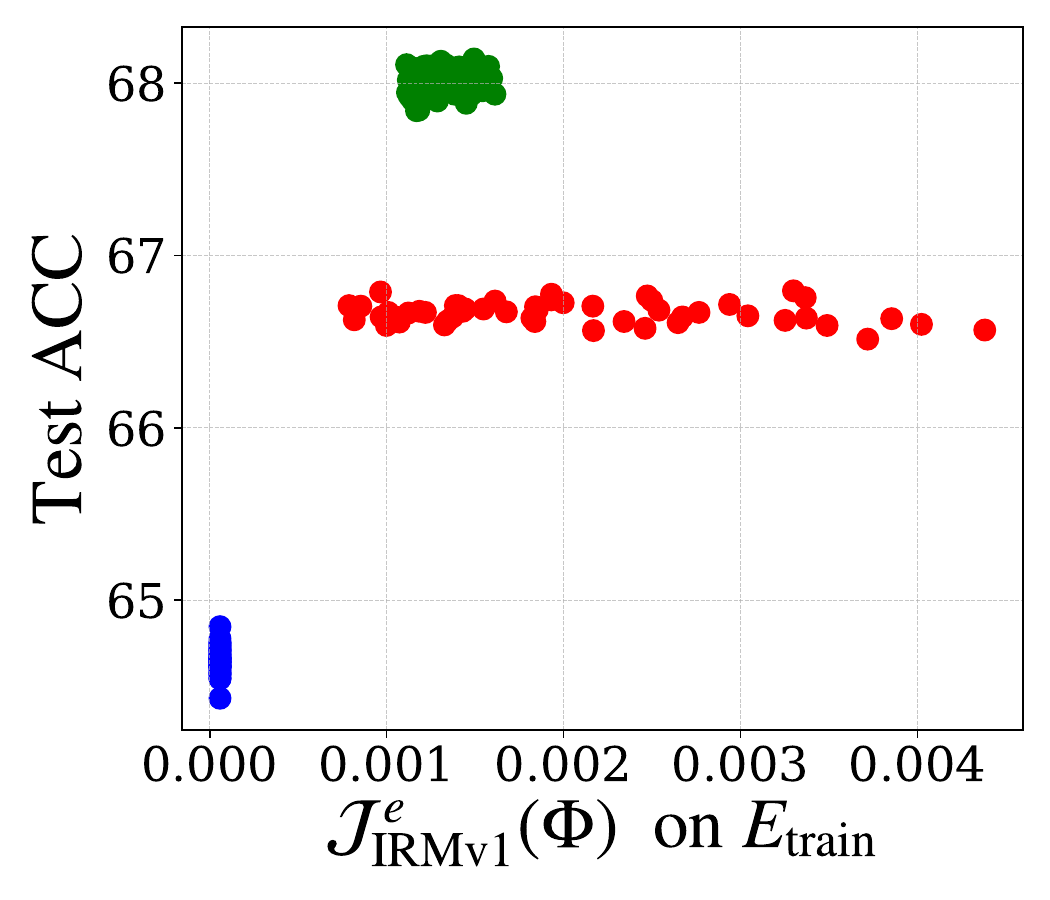}
    \label{fig:cmnist_extend1}
  \end{subfigure}%
  \begin{subfigure}[b]{0.33\linewidth}
    \includegraphics[width=\linewidth]{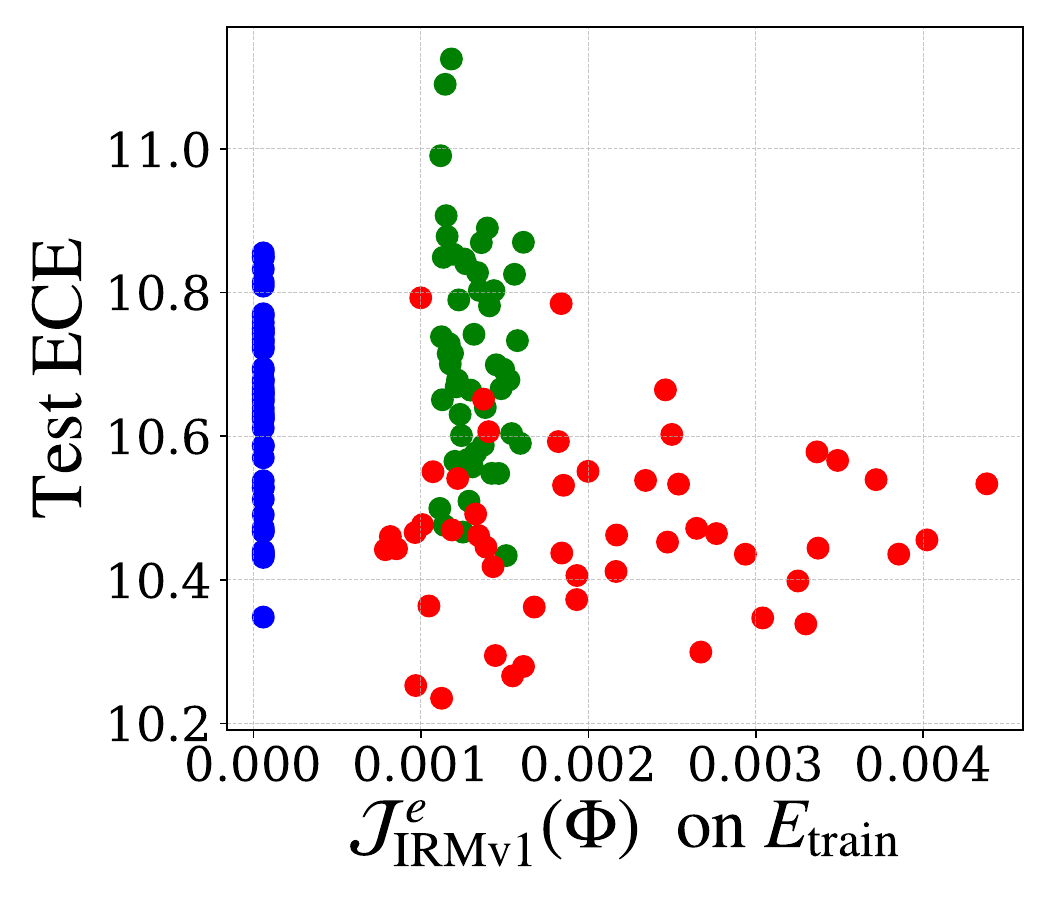}
    \label{fig:cobj_grad1}
  \end{subfigure}%
  \begin{subfigure}[b]{0.33\linewidth}
    \includegraphics[width=\linewidth]{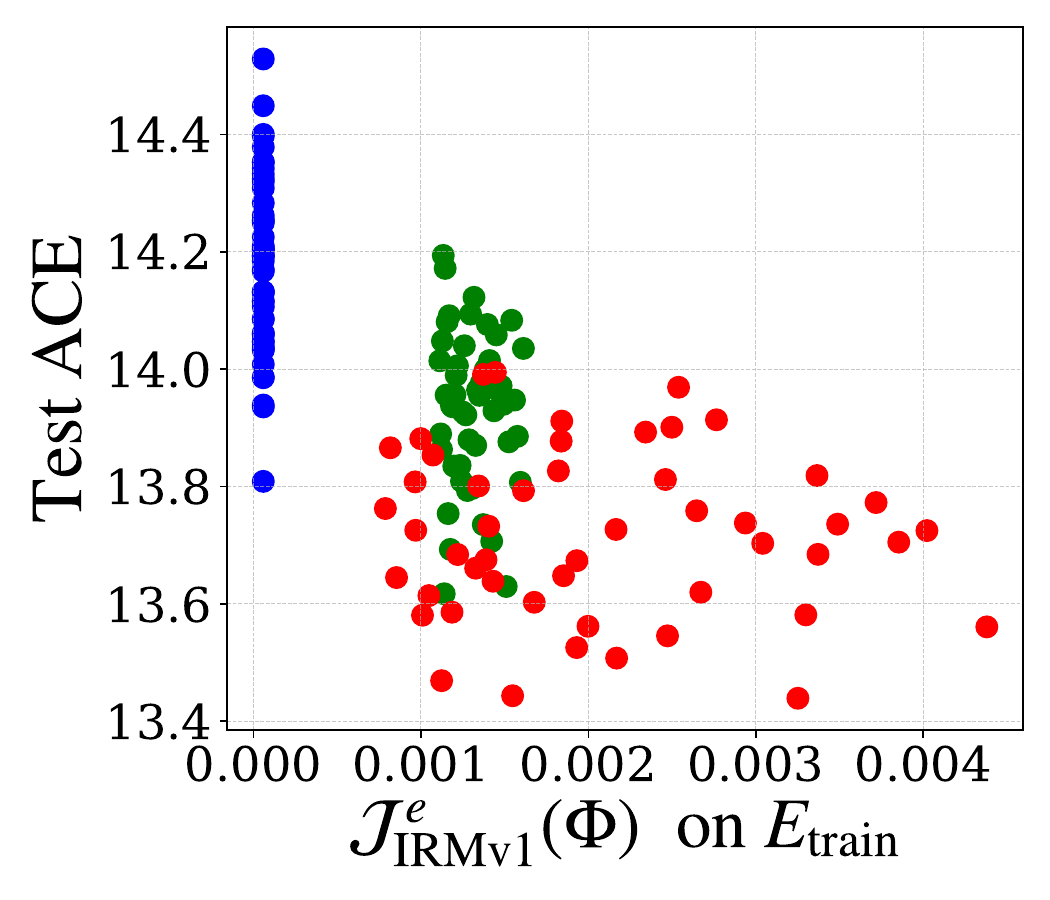}
    \label{fig:cobj_grad2}
  \end{subfigure}%
  \vspace{-5mm}
  \begin{center}
    \begin{tabular}{clclcl}
      \tikz \draw[blue,fill=blue] (0,0) circle (0.5ex); & IRMv1&
      \tikz \draw[matplotlibgreen,fill=matplotlibgreen] (0,0) circle (0.5ex); & v-IRMv1 (Ours)&
      \tikz \draw[red,fill=red](0,0)  circle (0.5ex); & mm-IRMv1 (Ours)\\
    \end{tabular}
  \end{center}

  \caption{The figure illustrates the relationship between the IRMv1 penalty values in the training
    environment and the corresponding test evaluation metrics---accuracy, ECE, and ACE (from left to
    right)---on the CMNIST dataset. Each point represents the recorded values at each epoch during
    the final 50 epochs for each method. Although IRMv1 effectively reduces the training penalty to
    near zero, its performance across all test metrics remains suboptimal, reinforcing the
    vulnerability highlighted in \Cref{ch:analysis}. In contrast, the proposed distributional
    extrapolation methods mitigate overfitting of the IRMv1 penalty to the training environment and
    consistently yield improved performance across all evaluation metrics.}
  \label{fig:vertical_images}
  \vspace{-2mm}
\end{figure}

\section{Related Work}\label{ch:relatedwork}

\subsection{Learning Methods for OOD Generalization}

In recent practical applications of deep neural networks, violations of the i.i.d.\ (independent and
identically distributed) assumption have become increasingly common, making the improvement of
generalization performance under out-of-distribution (OOD) settings a critical challenge. To address
this issue, learning algorithm-based approaches have emerged as one of the mainstream solutions.

Representation learning–based approaches are a widely studied area in the context of OOD
generalization, aiming to extract invariant features from raw data that are independent of
environmental variations. Prominent examples include IRM \citep{b11}, which relies on bi-level
optimization, and domain-adversarial neural networks (DANN)
\citep{b36}, which employ adversarial learning
techniques.

In addition to representation learning, calibration-based approaches have also garnered increasing
attention. Calibration refers to the alignment between a model’s confidence and its predictive
accuracy, and has been theoretically linked to OOD generalization \citep{wald2021on,
  yoshida2024towards}. \citep{wald2021on} demonstrated a theoretical connection between calibration
and model invariance, showing that incorporating a calibration-focused regularization term into ERM
can enhance OOD generalization performance.

Moreover, methods based on min-max optimization of the loss function have been proposed for OOD
generalization. \citep{sagawa2020distributionallyrobustneuralnetworks} applied the concept of
distributionally robust optimization (DRO) \citep{ben2013robust}---which minimizes loss under the
worst-case distribution within the training data---to deep learning and introduced group
DRO. However, \citep{b25} highlighted the overfitting tendency of traditional DRO approaches, which
are confined to training environments, and proposed a method that performs DRO over a broader
distribution set, closer to potential test environments, via loss extrapolation.

\subsection{Variants of IRM}

IRM \citep{b11}, introduced as a representation learning framework for OOD scenarios, formulates the
objective as a bi-level optimization problem, which is known to be computationally challenging. To
address this difficulty, a variety of approximation methods have been developed.

The most fundamental of these is IRMv1, which assumes a linear form for the predictor $\pi$ and
reformulates the original problem as a single-level optimization via a penalization
approach. However, IRMv1 has been criticized for its tendency to overfit to training
environments. In response, Bayesian IRM (BIRM) \citep{b27} adopts Bayesian inference to encourage
invariance in the posterior distributions, aiming to mitigate overfitting. Information
bottleneck-based IRM (IB-IRM) \citep{b23} improves the learning of invariant representations by
applying the information bottleneck principle directly to the feature extractor $\Phi$
\citep{yoshida2024towards}.

Alternative formulations based on game theory have also been proposed. IRM game \citep{b26}
formulates the learning objective as a Nash equilibrium over environment-specific losses, while
Pareto IRM (PAIR) \citep{b24} seeks Pareto-optimal trade-offs among the losses of ERM, IRMv1, and
REx to balance in-distribution and OOD performance.

To address the scalability challenges posed by over-parameterized neural networks,
\citep{pmlr-v162-zhou22e} demonstrated that IRMv1 can be effectively applied to large models by
suppressing their fitting capacity through pruning, thereby preserving invariance. In addition,
\citep{b37} proposed a novel bi-level optimization strategy that simplifies the
lower-level problem of IRM, diverging from conventional single-level approximation methods.

Despite their theoretical motivations, existing approximation methods have been reported to
underperform compared to well-tuned ERM in practice \citep{b29}, underscoring the need for more
robust and principled IRM approximations. To this end, the present study introduces improved penalty
formulations for IRMv1, aiming to enhance the performance of existing approximation methods that
rely on the original IRMv1 penalty.

\section{Discussion}\label{ch:limitation}

\subsection{v-IRMv1 or mm-IRMv1?}

Although mm-IRMv1 consistently outperformed v-IRMv1 in the SEM-based experiments, the opposite trend
was observed on realistic vision datasets, where v-IRMv1 demonstrated more stable and superior
performance. This discrepancy is attributed to the numerical instability inherent in closed-form
min-max optimization problems, as noted by \citep{b25}.

Specifically, the bottleneck arises from the $\max$ operator in (\ref{eq:mm-irmv1}): although each
individual loss $\mathcal{J}_{\text{IRMv1}, e}(\Phi)$ may be smooth in $\Phi$ for all $e$, function
$\max\nolimits_{e \in E_{\text{train}}} \mathcal{J}_{\text{IRMv1}, e}(\Phi)$ is not necessarily
smooth and may exhibit a complex, non-differentiable landscape (graph) that is difficult to
optimize. During training on realistic datasets, the environments $\arg \max\nolimits_{e \in
  E_{\text{train}}} \mathcal{J}_{\text{IRMv1}, e}(\Phi)$ that achieve the maximum value can change
frequently and abruptly. This results into abrupt gradient shifts, numerical instability and
increased difficulty in optimization.

The pronounced success of mm-IRMv1 in the SEM-based experiments is hypothesized to stem from the
model's low dimensionality. While the vision-dataset experiments employed ResNet-18 in an
overparameterized regime, the SEM setting used a model with only ten learnable parameters---five for
$\hat{\mathbf{w}}_{\text{inv}}$ and five for $\hat{\mathbf{w}}_{\text{spu}}$---representing a
substantially lower-dimensional hypothesis space. This significant reduction in degrees of freedom
likely renders the model less sensitive to the non-smoothness introduced by the mm-IRMv1 objective,
resulting in a less complex optimization landscape and more stable convergence.

As an ablation study on CMNIST, the proposed methods were evaluated using both a 3-layer MLP and
ResNet-18. Table~\ref{tab:wrap_dimensionality} illustrates how model dimensionality influences
the test accuracy of mm-IRMv1. In the low-dimensional setting with the 3-layer MLP, mm-IRMv1 clearly
outperforms v-IRMv1. However, in the higher-capacity, overparameterized ResNet-18, the effects of
non-smoothness become more pronounced, and v-IRMv1 tends to yield superior performance.

\begin{table}
  \vspace{-5mm}
  \centering
  \caption{Test accuracy of mm-IRMv1 as a function of model dimensionality on CMNIST. In the
    low-dimensional and less flexible setting with a 3-layer MLP, mm-IRMv1 consistently outperforms
    v-IRMv1. However, as the model scales up and becomes increasingly overparameterized (e.g.,
    ResNet-18), the non-smoothness introduced by mm-IRMv1 becomes more pronounced, allowing v-IRMv1
    to achieve superior performance.}
  \label{tab:wrap_dimensionality}
  \renewcommand{\arraystretch}{1.}
  \resizebox{0.5\textwidth}{!}{
    \begin{tabular}{lcc}
      \toprule
      \textbf{Methods} & \multicolumn{2}{c}{\textbf{Architecture (\#params)}} \\
      & 3-layer MLP ($\approx$0.3M) & ResNet18 ($\approx$11.7 M)\\
      \midrule
      IRMv1 & 65.7 $\pm$ 1.6 & 64.7 $\pm$ 0.5 \\
      \rowcolor{gray!15}v-IRMv1  &65.9 $\pm$ 1.6  & \textbf{68.1 $\pm$ 0.4 }\\
      \rowcolor{gray!15}mm-IRMv1 & \textbf{67.0 $\pm$ 1.0} & 66.8 $\pm$ 0.6 \\
      \bottomrule
    \end{tabular}
  }
  \vspace{-5mm}
\end{table}

\section{Conclusions}\label{ch:conclusion}

This paper demonstrated that IRMv1, the most widely used approximation of invariant risk
minimization (IRM), does not reliably achieve the intended invariance and is susceptible to
overfitting due to reliance on spurious features. To address this limitation, this work proposed a
novel approach that mitigates overfitting to limited training environments by extrapolating the
IRMv1 penalty across data distributions, thereby pseudo-diversifying the training environments. The
effectiveness of the proposed method was validated through both small-scale experiments using
structural equation models (SEMs) and evaluations under recent overparameterized settings. This
contribution offers a practical improvement to existing methods built upon the IRMv1 penalty and
lays the groundwork for more principled approximations of IRM.

\bibliography{main.bib}
\bibliographystyle{tmlr}

\appendix
\newpage

\section*{Appendices}

\section{Proof of Theorem 3.1}\label{ch:appendix1}

First, by the definition of $\mathcal{F}_{\delta}$,
\begin{align*}
  \sum\nolimits_{e \in E_{\text{train}}} R_e (\pi \cdot \Phi) \leq \delta\,, \quad \forall (\pi,
  \Phi) \in \mathcal{F}_{\delta} \,,
\end{align*}
and the initial hypothesis that $R_e(\cdot) \geq 0$, it can be easily concluded that
\begin{align*}
  R_e (\pi \cdot \Phi) \le \delta\,, \quad \forall (\pi, \Phi)\in \mathcal{F}_{\delta}\,, \forall
  e \in E_{\mathrm{train}} \,.
\end{align*}

Assume, for contradiction to the claim of the theorem, that there exist an $\hat{e} \in
E_{\text{train}}$ and $(\hat{\pi}, \hat{\Phi}) \in \mathcal{F}_{\delta}$ such that
\begin{align}
  \lvert \nabla_{ \pi } R_{\hat{e}} (\hat{\pi} \cdot \hat{\Phi}) \rvert
  > \sqrt{2L_{\hat{\Phi}} \delta} \,. \label{assume.contra}
\end{align}
Hereafter, to simplify notation, $R_{\hat{e}} (\hat{\pi} \cdot \hat{\Phi})$ will be expressed as
$R_{\hat{e}} (\hat{\pi})$ to stress the fact that it is a function of $\hat{\pi}$ since
$\hat{\Phi}$ is fixed.

Let $\hat{\pi}' \coloneqq \hat{\pi} - \eta \nabla_{\pi} R_{\hat{e}} (\hat{\pi} \cdot
\hat{\Phi})$. By \Cref{as:l-smooth}, the descent lemma~\citep[Lem.~1.2.3]{nesterov.convex.opt.book}
suggests
\begin{align}
  R_{\hat{e}} (\hat{ \pi}')
  & = R_{\hat{e}} (\, \hat{\pi} - \eta \nabla_{\pi} R_{\hat{e}} (\hat{\pi})\, ) \notag \\
  & \leq R_{\hat{e}} (\hat{\pi}) + \nabla_{\pi} R_{\hat{e}} (\hat{ \pi})^{\top}
  (\, -\eta \nabla_{\pi} R_{\hat{e}} (\hat{\pi})\, ) + \frac{L_{\hat{\Phi}}}{2} \lvert -\eta
  \nabla_{\pi} R_{\hat{e}} (\hat{\pi}) \rvert^2 \notag \\
  & = R_{\hat{e}} (\hat{\pi}) - \eta \lvert \nabla_{\pi} R_{\hat{e}} (\hat{\pi}) \rvert^2
  + \frac{L_{\hat{\Phi}}}{2} \eta^2 \lvert \nabla_{\pi} R_{\hat{e}} (\hat{\pi}) \rvert^2 \notag \\
  & = R_{\hat{e}} (\hat{\pi}) - (\eta - \frac{L_{\hat{\Phi}}}{2} \eta^2 )\,
  \lvert \nabla R_{\hat{e}} (\hat{\pi}) \rvert^2 \,. \label{descent.lemma}
\end{align}

Let $\eta = 1/L_{\hat{\Phi}}$. Then
\begin{align*}
  \eta - \frac{L_{\hat{\Phi}}}{2} \eta^2 = \frac{1}{L_{\hat{\Phi}}} - \frac{L_{\hat{\Phi}}}{2}
  \frac{1}{L_{\hat{\Phi}}^2} = \frac{1}{L_{\hat{\Phi}}} - \frac{1}{2L_{\hat{\Phi}}} =
  \frac{1}{2L_{\hat{\Phi}}} \,.
\end{align*}
Hence, by (\ref{assume.contra}) and (\ref{descent.lemma}),
\begin{align*}
  R_{\hat{e}} (\hat{\pi}') & \leq R_{\hat{e}} (\hat{\pi}) - \frac{1}{2L_{\hat{\Phi}}} \lvert
  \nabla_{\pi} R_{\hat{e}} (\hat{\pi}) \rvert^2 \\
  & < \delta - \frac{1}{2L_{\hat{\Phi}}} 2L_{\hat{\Phi}} \delta \\
  & = \delta - \delta = 0 \,,
\end{align*}
which contradicts the initial hypothesis that $R_e(\cdot) \geq 0$. This completes the proof.

\section{Proof of Lemma 4.1}\label{sec:proof_lemma41}

Notice that $\forall (\pi, \Phi)$,
\begin{align*}
  \lvert \nabla_{\pi} R_e (\pi \cdot \Phi) \rvert^2
  & = \lvert\, \nabla_{ \pi}\, \mathbb{E}_{(x,y) \sim P_e } \left\{\, \ell ( \pi \cdot\Phi(x),y)\,
  \right\}\, \rvert^2 \\
  & = \lvert\, \mathbb{E}_{(x,y) \sim P_e } \left\{\, \nabla_{ \pi} \ell ( \pi \cdot\Phi(x),y)\,
  \right\}\, \rvert^2 \\
  & \leq \mathbb{E}_{(x,y) \sim P_e} \left\{\, \lvert \nabla_{ \pi } \ell ( \pi \cdot\Phi(x),y)
  \rvert^2\, \right\} \\
  & = \mathcal{J}_{ \textnormal{IRM}, e} (\pi, \Phi) \,,
\end{align*}
where the interchange between the expectation and the partial gradient in the second equality
follows from the dominated convergence theorem, while the inequality is a direct application of
Jensen's inequality \citep{ash.probability.book}.

\section{Derivation of the closed-form expression of (\ref{eq:mm-irmv1})
} \label{ch:appendix_mm_proof}

Problem (\ref{eq:mm-irmv1}) belongs to the class of linear-programming (LP) problems
\citep[(13.1)]{nocedal.wright.book}. By defining $\bm{\mu} \coloneqq (\mu_e)_{e \in
  E_{\text{train}}} \in \mathbb{R}_+^{ \lvert E_{\text{train}} \rvert}$ and for a $\varrho \in
\mathbb{R}$, the Lagrangian function \citep[(13.3)]{nocedal.wright.book} for (\ref{eq:mm-irmv1})
becomes:
\begin{align*}
  \mathcal{L} ( \bm{\alpha}, \varrho, \bm{\mu} )
  = - \sum\nolimits_{e \in E_{\text{train}} } \alpha_e\, \mathcal{J}_{\textnormal{IRMv1}, e} (\Phi)
  + \varrho \left( 1 - \sum\nolimits_{e \in E_{\text{train}} } \alpha_e \right)
  + \sum\nolimits_{e \in E_{\text{train}} } \mu_e (\alpha_{\min} - \alpha_e) \,,
\end{align*}
where the minus sign is applied to $\mathcal{J}_{\textnormal{IRMv1}, e} (\Phi)$ to align with the
convention that the Lagrangian function is typically formulated for minimization. It is well-known
for LP problems that $\bm{\alpha}$ solves (\ref{eq:mm-irmv1}) if and only if there exist $(\varrho,
\bm{\mu} )$ such that the following Karush-Kuhn-Tucker (KKT) conditions are satisfied
\citep[(13.4)]{nocedal.wright.book}:
\begin{subequations}
  \begin{align}
    & \text{(Stationarity:)} && \frac{\partial \mathcal{L}}{\partial \alpha_e}
    = - \mathcal{J}_{\textnormal{IRMv1}, e}(\Phi) - \varrho - \mu_e = 0\,,
    \quad\forall e\in E_{\textnormal{train}} \,, \label{kkt.stationarity}\\
    & \text{(Complementarity condition:)}
    && \mu_e\,(\alpha_{\min} - \alpha_e) = 0\,,
    \quad \forall e \in E_{\textnormal{train}}\,, \label{kkt.complementarity}\\
    & \text{(Primal feasibility:)}
    && \sum\nolimits_{e \in E_{\textnormal{train}} } \alpha_e = 1\,, \quad \alpha_e \ge
    \alpha_{\min}\,, \label{kkt.primal.feas}\\
    & \text{(Dual feasibility:)}
    && \mu_e \geq 0\,, \quad\forall e \in E_{\textnormal{train}} \,. \label{kkt.dual.feas}
  \end{align}
\end{subequations}
Condition (\ref{kkt.primal.feas}) suggests $1 = \sum_e \alpha_e \geq \sum_e \alpha_{\min} =
\alpha_{\min}\, \lvert E_{ \textnormal{train} } \rvert \Rightarrow \alpha_{\min} \leq 1 / \lvert E_{
  \textnormal{train} } \rvert$. Consequently, the following two cases are considered.

\noindent\textbf{Case 1:} $\alpha_{\min} = 1 / \lvert E_{ \textnormal{train} } \rvert$. If there
exists an $e\in E_{\textnormal{train}}$ such that $\alpha_e > \alpha_{\min} = 1 / \lvert E_{
  \textnormal{train} } \rvert$, then (\ref{kkt.primal.feas}) yields $1 = \sum_e \alpha_e > \sum_e
\alpha_{\min} = \sum_e 1 / \lvert E_{ \textnormal{train} } \rvert = 1$, which is absurd. Hence,
$\alpha_e = 1 / \lvert E_{ \textnormal{train} } \rvert$, $\forall e \in E_{\textnormal{train}}$. In
such a case,
\begin{align}
  C_{\text{mm}} (\Phi) = \frac{1}{ \lvert E_{ \textnormal{train} } \rvert } \sum\nolimits_{ e
    \in E_{ \textnormal{train} } } \mathcal{J}_{\text{IRMv1}, e} (\Phi) \,. \label{Cmm.case1}
\end{align}

\noindent\textbf{Case 2:} $\alpha_{\min} < 1 / \lvert E_{ \textnormal{train} } \rvert$. If $\alpha_e
= \alpha_{\min}$, $\forall e \in E_{ \textnormal{train} }$, then the absurd result $1 = \sum_e
\alpha_e = \sum_e \alpha_{\min} < \sum_e 1 / \lvert E_{ \textnormal{train} } \rvert = 1$ is
obtained, which suggests that there must exist at least one $e \in E_{ \textnormal{train} }$ such
that $\alpha_e > \alpha_{\min}$.

\begin{enumerate}[ label = \textbf{(\roman*)}, itemsep = 0pt, topsep = 0pt ]

\item Consider those $e \in E_{ \textnormal{train} }$ with $\alpha_e > \alpha_{\min}$---there
  certainly exists at least one such $e$. Then, (\ref{kkt.complementarity}) yields $\mu_e = 0$, and
  (\ref{kkt.stationarity}) leads to $\mathcal{J}_{\textnormal{IRMv1}, e}(\Phi) = - \varrho$.

\item Consider those $e \in E_{ \textnormal{train} }$ with $\alpha_e = \alpha_{\min}$. Then, by
  (\ref{kkt.stationarity}) and (\ref{kkt.dual.feas}), $\mu_e = - \mathcal{J}_{\textnormal{IRMv1}, e}
  (\Phi) - \varrho \geq 0 \Rightarrow \mathcal{J}_{\textnormal{IRMv1}, e} (\Phi) \leq - \varrho$.

\end{enumerate}

The previous two points suggest $-\varrho = \max_{ e\in E_{ \textnormal{train} } }
\mathcal{J}_{\textnormal{IRMv1}, e}(\Phi)$. Upon defining $E_{\max} \coloneqq \arg\max_{ e\in E_{
    \textnormal{train} } } \mathcal{J}_{\textnormal{IRMv1}, e}(\Phi)$ and its complement
$E_{\max}^{\complement} \coloneqq E_{\textnormal{train}} \setminus E_{\max}$, it becomes also clear
that if $e\in E_{\max}^{\complement}$, that is $\mathcal{J}_{\textnormal{IRMv1}, e} (\Phi) < -
\varrho$, then necessarily $\alpha_e = \alpha_{\min}$. Because such an $\bm{\alpha}$ satisfies the
KKT conditions, plugging it back into (\ref{eq:mm-irmv1}) yields
\begin{alignat*}{2}
  \mathcal{C}_{ \text{mm} }(\Phi)
  & {} = {} && \sum\nolimits_{ e \in E_{\text{train}} } \alpha_e\, \mathcal{J}_{\text{IRMv1}, e} (\Phi)
  \notag\\
  & = && \sum\nolimits_{ e \in E_{ \max } } \alpha_e\, \max\nolimits_{e \in E_{\text{train}} }
  \mathcal{J}_{\text{IRMv1}, e} (\Phi) + \sum\nolimits_{ e \in E_{ \max }^{\complement} }
  \alpha_{\min}\, \mathcal{J}_{\text{IRMv1}, e} (\Phi) \\
  & = && \left( 1 - \sum\nolimits_{ e \in E_{ \max }^{\complement} } \alpha_{\min} \right )
  \max\nolimits_{e \in E_{\text{train}} } \mathcal{J}_{\text{IRMv1}, e} (\Phi) + \alpha_{\min}
  \sum\nolimits_{ e \in E_{ \max }^{\complement} } \mathcal{J}_{\text{IRMv1}, e} (\Phi) \\
  & = && \left ( 1 - \alpha_{\min} \lvert E_{ \max }^{\complement} \rvert \right ) \max\nolimits_{e
    \in E_{\text{train}} } \mathcal{J}_{\text{IRMv1}, e} (\Phi) + \alpha_{\min} \sum\nolimits_{ e
    \in E_{ \textnormal{train} } } \mathcal{J}_{\text{IRMv1}, e} (\Phi) \\
  &&& - \alpha_{\min} \sum\nolimits_{ e \in E_{ \max } } \mathcal{J}_{\text{IRMv1}, e} (\Phi) \\
  & = && \left ( 1 - \alpha_{\min} \lvert E_{ \max }^{\complement} \rvert \right ) \max\nolimits_{ e
    \in E_{\text{train}} } \mathcal{J}_{\text{IRMv1}, e} (\Phi) + \alpha_{\min} \sum\nolimits_{ e
    \in E_{ \textnormal{train} } } \mathcal{J}_{\text{IRMv1}, e} (\Phi) \\
  &&& - \alpha_{\min}\, \lvert E_{ \max } \rvert\, \max\nolimits_{e \in E_{\text{train}} }
  \mathcal{J}_{\text{IRMv1}, e} (\Phi) \\
  & = && \left ( 1 - \alpha_{\min} \lvert E_{ \textnormal{train} } \rvert \right ) \max\nolimits_{ e
    \in E_{\text{train}} } \mathcal{J}_{\text{IRMv1}, e} (\Phi) + \alpha_{\min} \sum\nolimits_{ e
    \in E_{ \textnormal{train} } } \mathcal{J}_{\text{IRMv1}, e} (\Phi) \,,
\end{alignat*}
where $\lvert E_{ \textnormal{train} } \rvert = \lvert E_{ \max } \rvert + \lvert E_{ \max
}^{\complement} \rvert$ was used to deduce the last equality. Notice that this last expression for
$C_{\text{mm}}$ covers also the case $\alpha_{\min} = 1 / \lvert E_{ \textnormal{train} } \rvert$ of
(\ref{Cmm.case1}), establishing thus (\ref{eq:mm-irmv1}).

\section{Implementation Details}\label{ch:appendix2}

\subsection{SEMs}
\label{subsec:sem_settings}

All experiments are conducted on a single NVIDIA Tesla T4 GPU.

The experimental setup follows that of \citet{b11}. Each environment contains 1,000 generated
samples, and training is performed using full-batch updates. The number of training iterations is
fixed at \num{20000}, with a constant learning rate of \num{1e-3}. Penalty-related scaling factors
are selected via grid search. The search ranges are: ${1 \times 10^0, 1 \times 10^1}$ for
$\lambda$, ${-1 \times 10^0, -5 \times 10^0, -1 \times 10^1}$ for $\alpha_{\text{min}}$, and
${1 \times 10^0, 1 \times 10^1, 1 \times 10^2}$ for $\gamma$. Results are averaged over three
random seeds, and standard deviations are reported.

\subsection{Vision Datasets}

All experiments are conducted using a single NVIDIA H100 GPU.

The datasets and their respective configurations are described as follows.

\begin{itemize}
\item \textbf{Colored MNIST (Colored FashionMNIST):} Colored MNIST (and Colored FashionMNIST) is a
  binary classification dataset derived from MNIST (or FashionMNIST), where digits 0–4 are assigned
  to class 0 and digits 5–9 to class 1. Each dataset consists of \num{70000} samples (\num{50000}
  for training and \num{20000} for testing), with input dimensions of $(1, 28, 28)$. The invariant
  features correspond to the original grayscale images, while the spurious features are encoded as a
  single-channel color (e.g., red or green) applied to each image. Typically, each class is
  associated with a specific color, but this color flips for a proportion $p$ of the samples. In the
  experiments, the training environments use $p = [10\%, 20\%]$, and the test environment uses $p =
  [90\%]$.

\item \textbf{PACS:} The PACS dataset contains \num{9991} images of size $(3, 224, 224)$ across 7
  classes and 4 distinct visual styles, each corresponding to a different domain. In the
  experiments, the Cartoon, Photo, and Sketch domains are used for training, while the Art Painting
  domain is used for testing.

\item \textbf{VLCS:} The VLCS dataset consists of \num{10729} images of size $(3, 224, 224)$ across
  5 classes, drawn from four different sources. For the experimental setup, the Caltech101, LabelMe,
  and VOC2007 domains are used for training, while SUN09 serves as the test domain.
\end{itemize}

The Adam optimizer is used with a learning rate of $5 \times 10^{-4}$ across all datasets. For
correlation shift experiments, models are trained using full-batch optimization for \num{500}
epochs. For other settings, training is performed for \num{200} epochs with a batch size of
\num{32}. Following the protocol of \citet{b27, b37}, the penalty coefficient $\lambda$ is set to $1
\times 10^6$ for correlation shift experiments and $1 \times 10^0$ for diversity shift experiments.


The hyperparameters $\alpha_{\text{min}}$ and $\gamma$ are tuned via grid search. The search ranges
are:
\begin{itemize}
\item $\alpha_{\text{min}} \in \{-1 \times 10^{-1}, -2 \times 10^{-1}, \ldots, -1 \times 10^0\}$
\item $\gamma \in \{1 \times 10^{-1}, 2 \times 10^{-1}, \ldots, 1 \times 10^0\}$
\end{itemize}

Model selection follows the ``test-domain validation set'' strategy \citep{b29} for correlation
shift settings, and the ``training-domain validation set'' strategy \citep{b29} for diversity shift
settings. All reported results are averaged over three random seeds, and standard deviations are
also computed.

\section{Additional Results}\label{ch:appendix3}

\subsection{Results on SEMs}

\begin{table}[ht]
  \caption{Invariance errors in SEMs. Even in scenarios where the training environments are
    similar---making it difficult to eliminate spurious features---the proposed methods,
    particularly mm-IRMv1, consistently achieve substantial improvements over the IRMv1
    baseline.}
  \renewcommand{\arraystretch}{1.2} \resizebox{\textwidth}{!}{%
    \begin{tabular}{lcc|cc|cc}
      \toprule
      &\multicolumn{2}{c|}{$E_{\text{train}} = \ $\{0.1, 0.5, 1\}} &
      \multicolumn{2}{c|}{$E_{\text{train}} = \ $\{0.1, 0.4, 0.7, 1\}} &
      \multicolumn{2}{c}{$E_{\text{train}} = \ $\{0.1, 0.2, 0.3, 0.4, 0.5\}}
      \\
      & causal err ($\downarrow$) & non-causal err ($\downarrow$)
      & causal err ($\downarrow$) & non-causal err ($\downarrow$)
      & causal err ($\downarrow$) & non-causal err ($\downarrow$) \\
      \midrule
      IRMv1 & 0.679 $\pm$ 0.093 & 0.414 $\pm$ 0.046 & 0.986 $\pm$ 0.513 & 0.524 $\pm$ 0.192 & 1.179
      $\pm$ 0.424 & 0.570 $\pm$ 0.231 \\
      \rowcolor{gray!15} v-IRMv1 (Ours) & \textbf{0.597 $\pm$ 0.024} & \textbf{0.381 $\pm$ 0.014} &
      \textbf{0.689 $\pm$ 0.320} & \textbf{0.402 $\pm$ 0.094} & \textbf{0.776 $\pm$ 0.567} &
      \textbf{0.330 $\pm$ 0.195} \\
      \rowcolor{gray!15} mm-IRMv1 (Ours) & \textbf{0.427 $\pm$ 0.220} & \textbf{0.305 $\pm$ 0.146} &
      \textbf{0.527 $\pm$ 0.252} & \textbf{0.273 $\pm$ 0.127} & \textbf{0.736 $\pm$ 0.537} &
      \textbf{0.373 $\pm$ 0.204} \\
      \bottomrule
  \end{tabular}}
  \label{tab:sem_result_others}
\end{table}

In \Cref{tab:sem_result}, the number of training environments is limited to two. Additional
experiments are conducted under alternative settings, with results presented in
\Cref{tab:sem_result_others}. Specifically, scenarios with three, four, and five training
environments are evaluated. Across all cases, our methods consistently achieves substantial
improvements.


\subsection{Relationship between IRM penalty and Test
  Performance} \label{sec:additional_penalty_relation}

This section provides additional figures that illustrate the effectiveness of the proposed methods
in mitigating overfitting of the IRM penalty, consistent with the results shown in
\Cref{fig:vertical_images}.

\subsubsection{BIRM}

Figure~\ref{fig:birm_penalty} illustrates the impact of the proposed extrapolation method on BIRM
for CMNIST. While all methods perform similarly in minimizing $\mathcal{J}_{\text{IRMv1}, e}(\Phi)$
within the training environments, notable differences emerge in test performance, demonstrating that
the proposed approach effectively mitigates overfitting to the training data. Furthermore, although
the test performance of the original BIRM remains nearly constant, it exhibits instability in
$\mathcal{J}_{\text{IRMv1}, e}(\Phi)$ across the training environments.

\begin{figure}[ht]
  \vspace{-3mm}
  \centering
  \begin{subfigure}[b]{0.33\linewidth}
    \includegraphics[width=\linewidth]{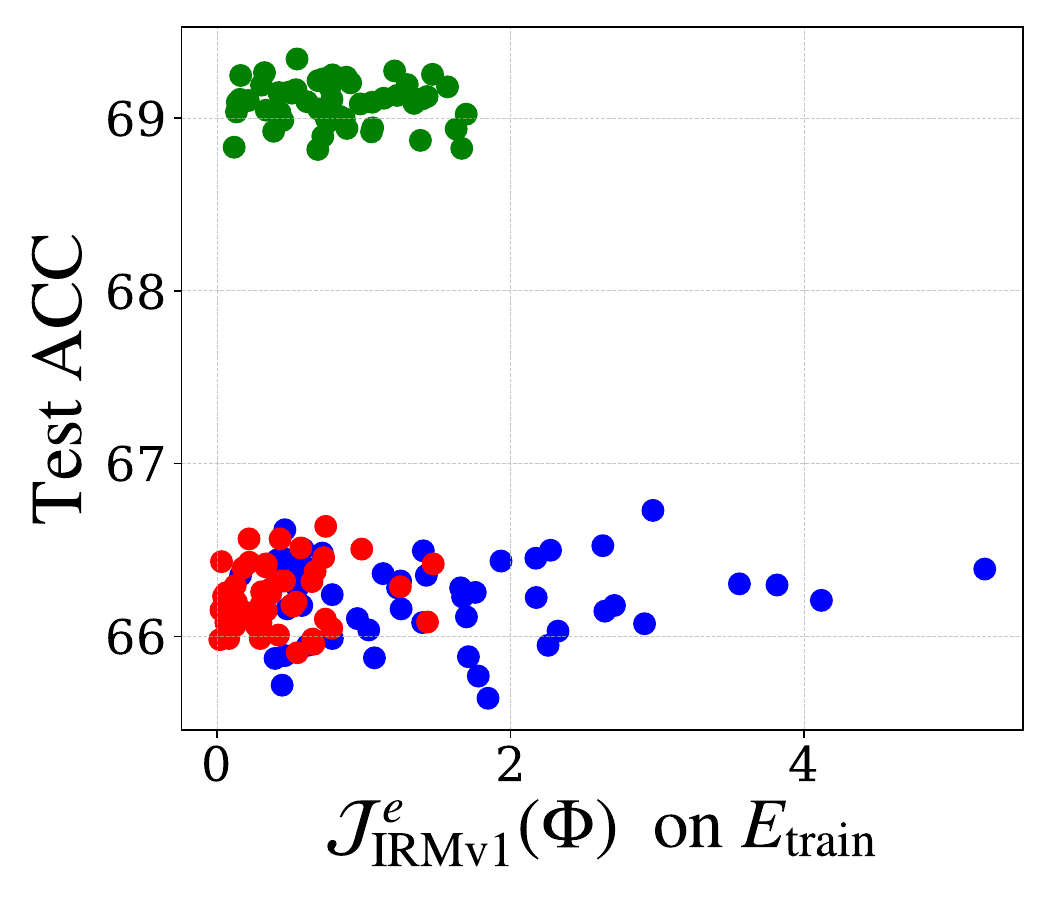}
    \label{fig:cmnist_extend2}
  \end{subfigure}%
  \begin{subfigure}[b]{0.33\linewidth}
    \includegraphics[width=\linewidth]{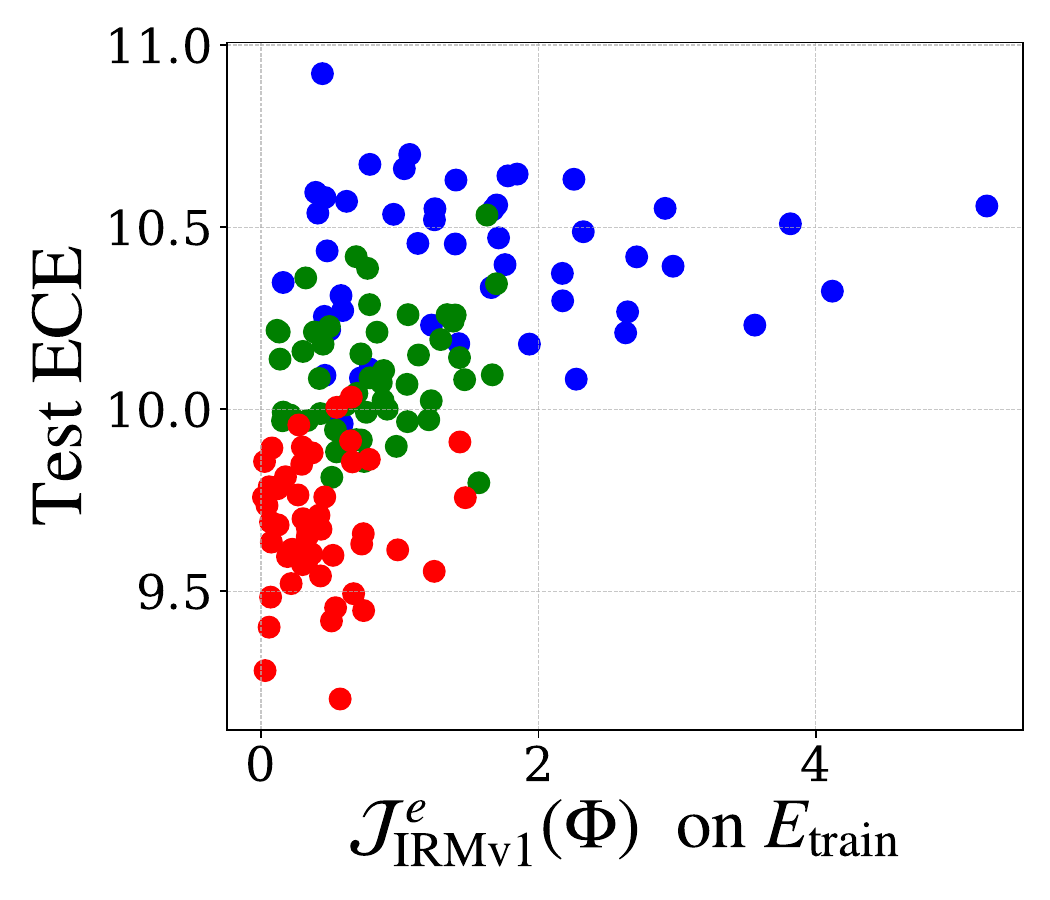}
    \label{fig:cobj_grad3}
  \end{subfigure}%
  \begin{subfigure}[b]{0.33\linewidth}
    \includegraphics[width=\linewidth]{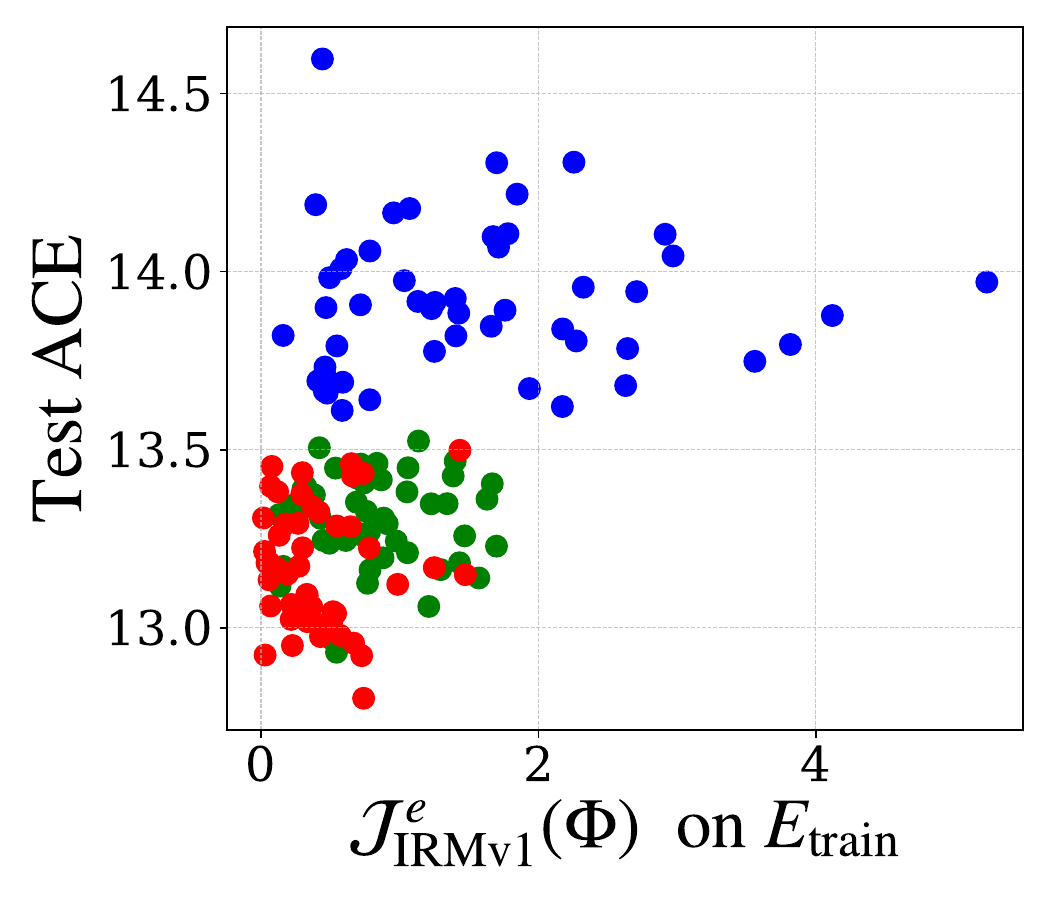}
    \label{fig:cobj_grad4}
  \end{subfigure}%
  \vspace{-5mm}
  \begin{center}
    \begin{tabular}{clclcl}
      \tikz \draw[blue,fill=blue] (0,0) circle (0.5ex); & BIRM&
      \tikz \draw[matplotlibgreen,fill=matplotlibgreen] (0,0) circle (0.5ex); & v-BIRM (Ours)&
      \tikz \draw[red,fill=red](0,0)  circle (0.5ex); & mm-BIRM (Ours)\\
    \end{tabular}
  \end{center}

  \caption{ The relationship between the IRMv1 penalty values in the training environment and the
    corresponding test evaluation metrics (from left to right: accuracy, ECE, and ACE) on
    CMNIST. Each point represents the values recorded at each epoch during the last 50 epochs for
    each method. \TODO{Add analysis} }
  \label{fig:birm_penalty}
  \vspace{-2mm}
\end{figure}

\subsubsection{BLO}

Figure \ref{fig:blo_penalty} illustrates the impact of the proposed extrapolation method on BLO
for CMNIST. While all methods achieve comparable values of $\mathcal{J}_{\text{IRMv1}, e}(\Phi)$ in
the training environments, substantial differences arise in test performance, indicating that the
proposed approach effectively mitigates overfitting to the training data. Notably, both proposed
methods improve calibration metrics relative to the original BLO, demonstrating that the
distributional extrapolation technique reduces overconfidence.


\begin{figure}[ht]
  \vspace{-3mm}
  \centering
  \begin{subfigure}[b]{0.33\linewidth}
    \includegraphics[width=\linewidth]{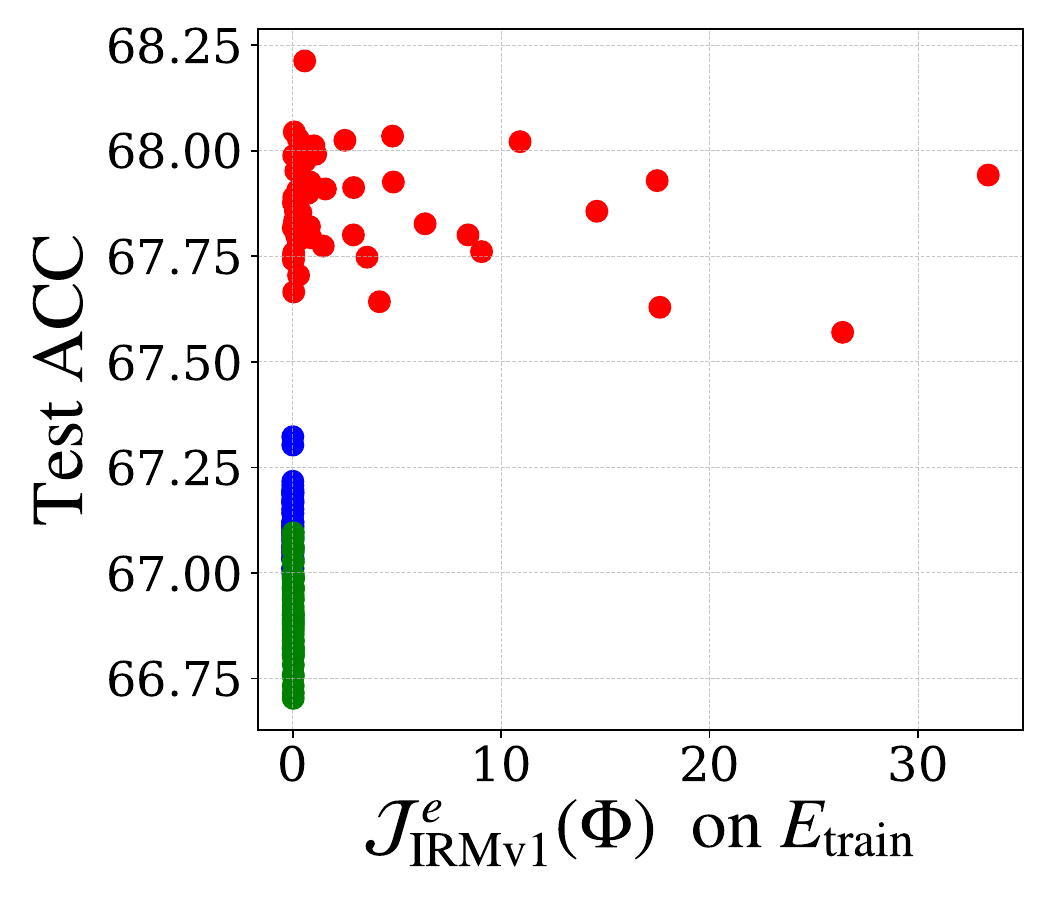}
    \label{fig:cmnist_extend3}
  \end{subfigure}%
  \begin{subfigure}[b]{0.33\linewidth}
    \includegraphics[width=\linewidth]{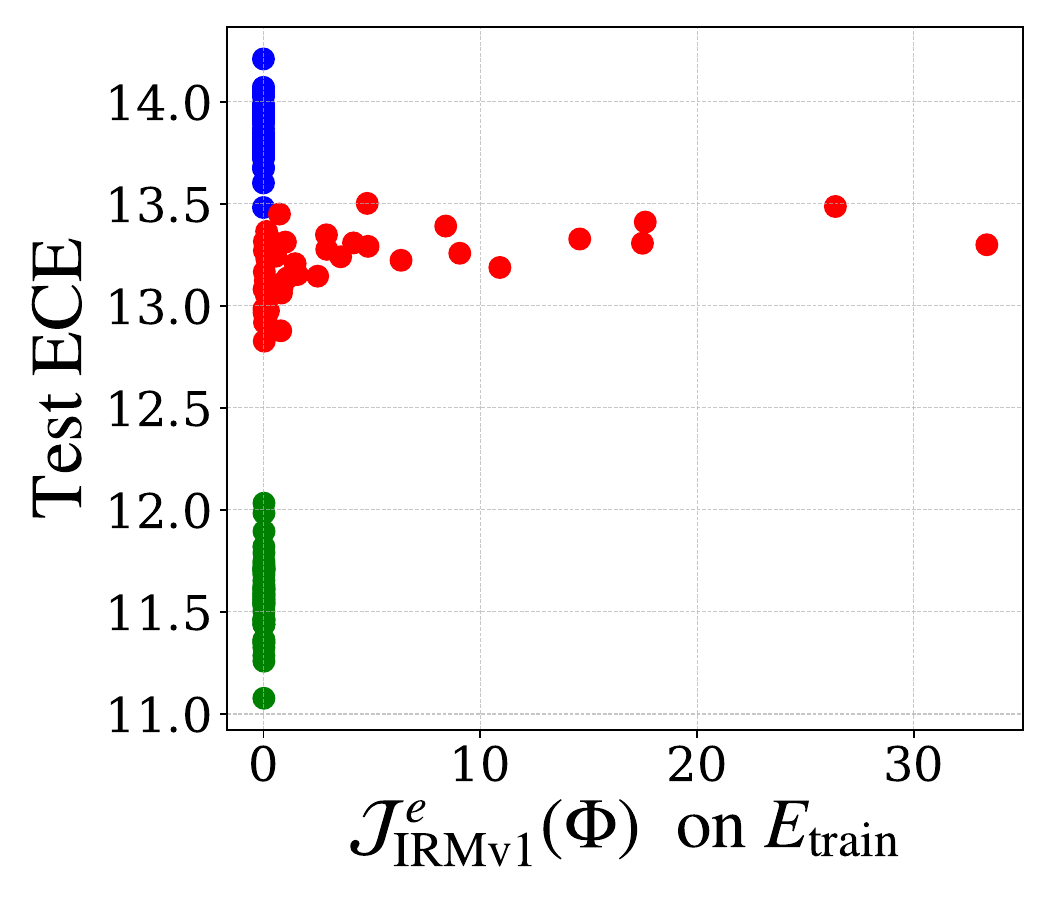}
    \label{fig:cobj_grad5}
  \end{subfigure}%
  \begin{subfigure}[b]{0.33\linewidth}
    \includegraphics[width=\linewidth]{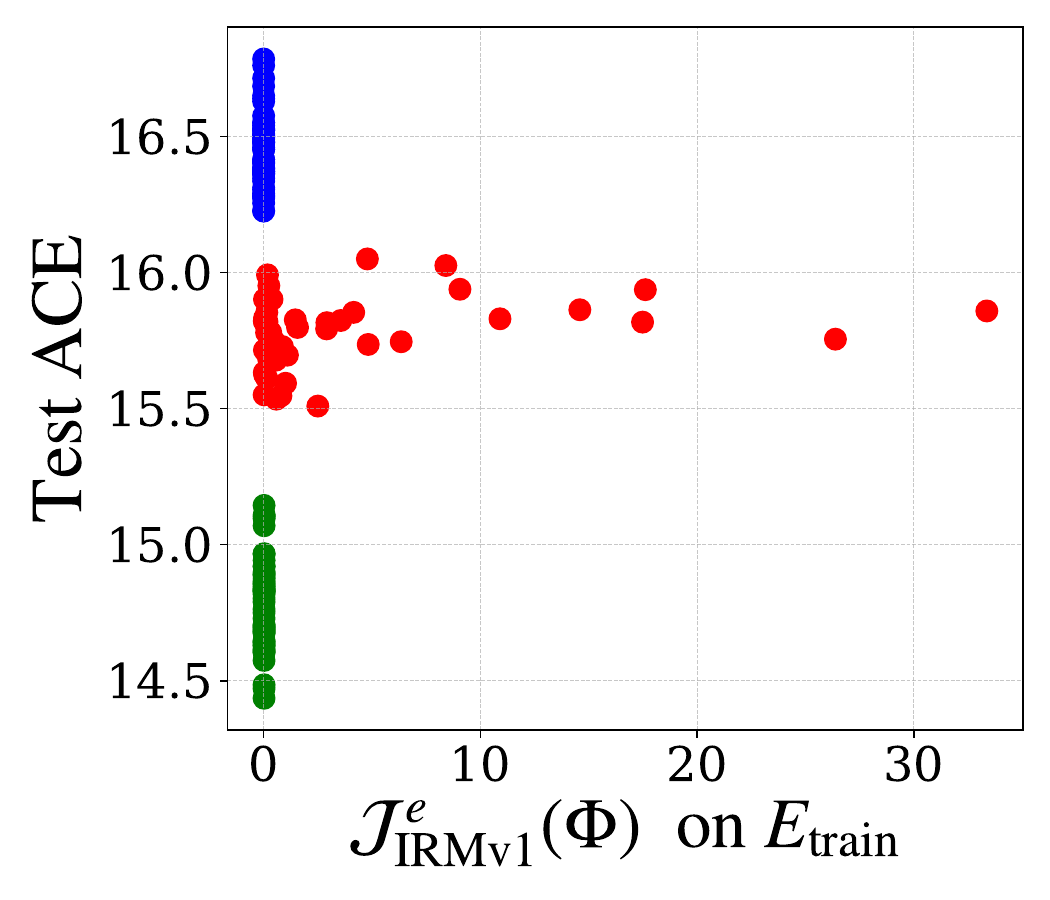}
    \label{fig:cobj_grad6}
  \end{subfigure}%
  \vspace{-5mm}
  \begin{center}
    \begin{tabular}{clclcl}
      \tikz \draw[blue,fill=blue] (0,0) circle (0.5ex); & BLO&
      \tikz \draw[matplotlibgreen,fill=matplotlibgreen] (0,0) circle (0.5ex); & v-BLO (Ours)&
      \tikz \draw[red,fill=red](0,0)  circle (0.5ex); & mm-BLO (Ours)\\
    \end{tabular}
  \end{center}

  \caption{ The relationship between the IRM penalty values in the training environment and the
    corresponding test evaluation metrics (from left to right: accuracy, ECE, and ACE) on
    CMNIST. Each point represents the values recorded at each epoch during the last 50 epochs for
    each method. \TODO{Add analysis} }
  \label{fig:blo_penalty}
  \vspace{-2mm}
\end{figure}

\subsubsection{Extrapolated IRMv1 and other existing methods}

Figure \ref{fig:all_penalty} presents a comparison of v-IRMv1, mm-IRMv1, and other existing methods
on CMNIST. Notably, v-IRMv1 maintains a consistently low $\mathcal{J}_{\text{IRMv1}, e}(\Phi)$ in
the training environments while achieving superior average test accuracy and calibration metrics
compared to the other methods. This suggests that invariant learning is effectively achieved. In
contrast, although IRMv1 reports relatively low calibration errors, it exhibits the lowest test
accuracy overall. Additionally, both BIRM and BLO demonstrate suboptimal performance on at least one
evaluation metric.

\begin{figure}[ht]
  \vspace{-3mm}
  \centering
  \begin{subfigure}[b]{0.33\linewidth}
    \includegraphics[width=\linewidth]{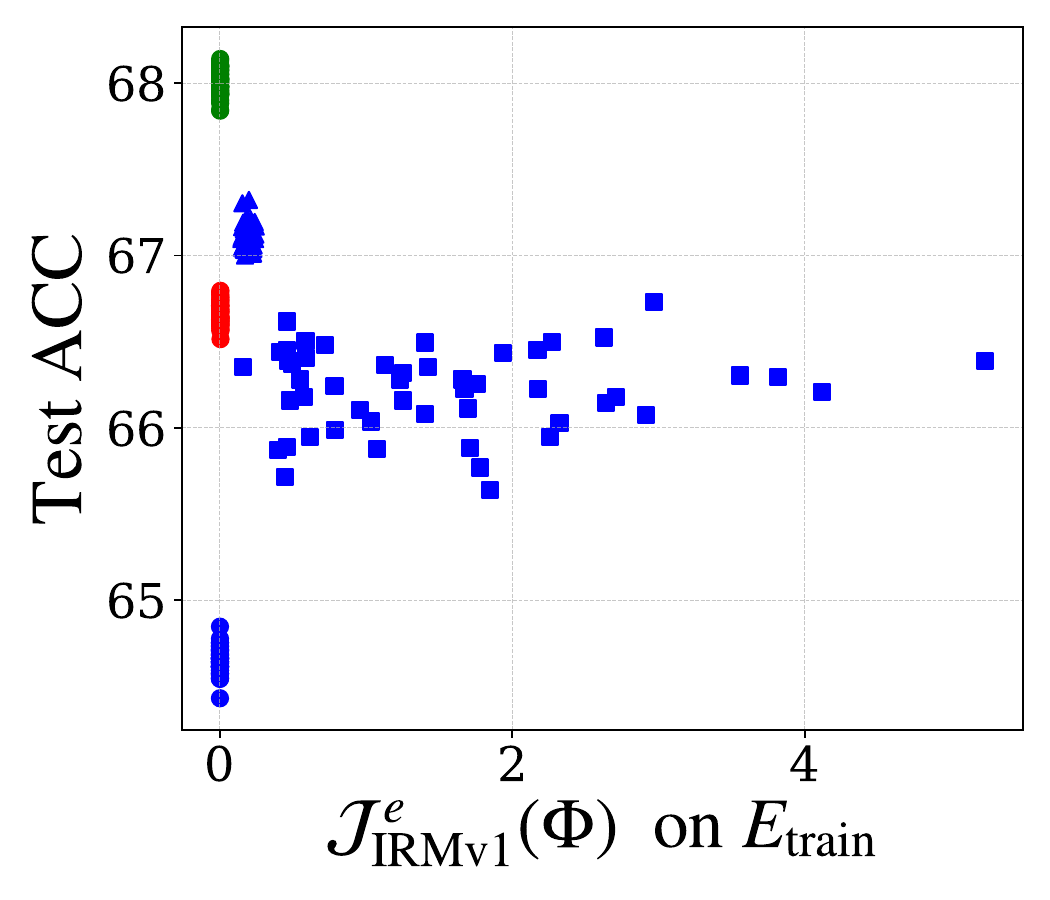}
    \label{fig:cmnist_extend4}
  \end{subfigure}%
  \begin{subfigure}[b]{0.33\linewidth}
    \includegraphics[width=\linewidth]{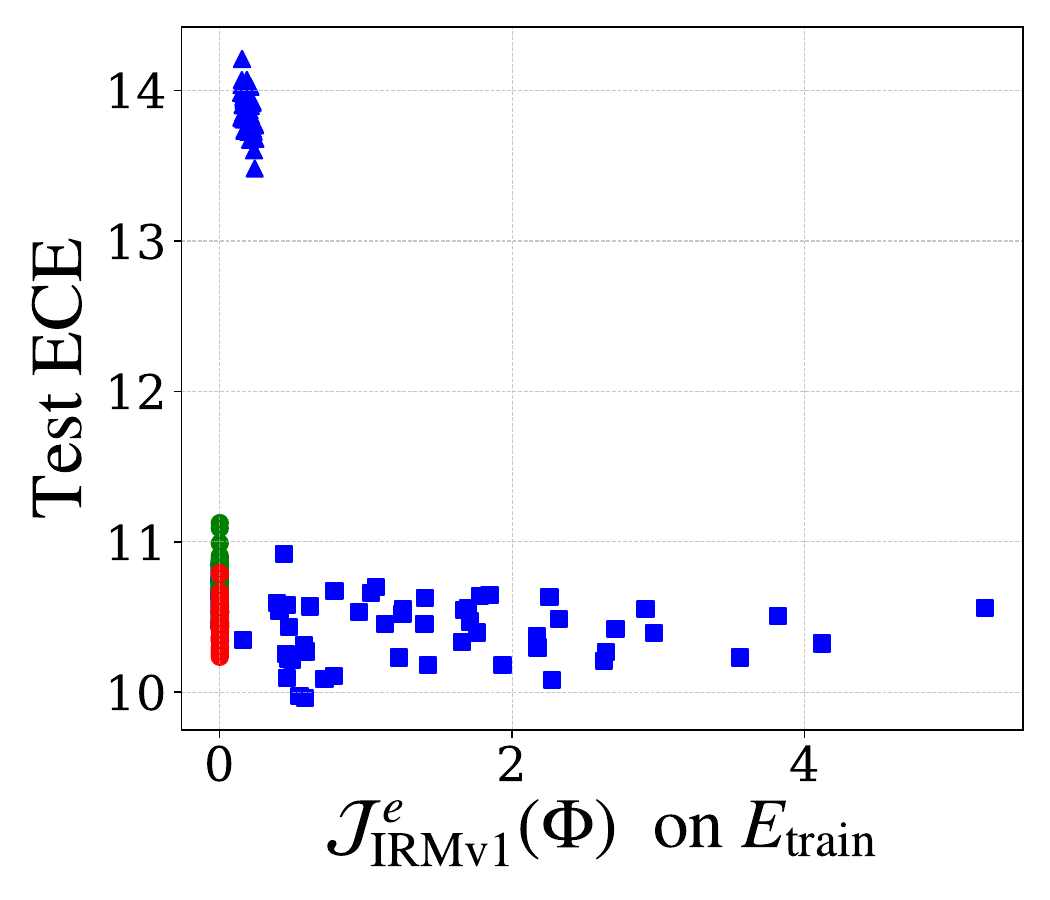}
    \label{fig:cobj_grad7}
  \end{subfigure}%
  \begin{subfigure}[b]{0.33\linewidth}
    \includegraphics[width=\linewidth]{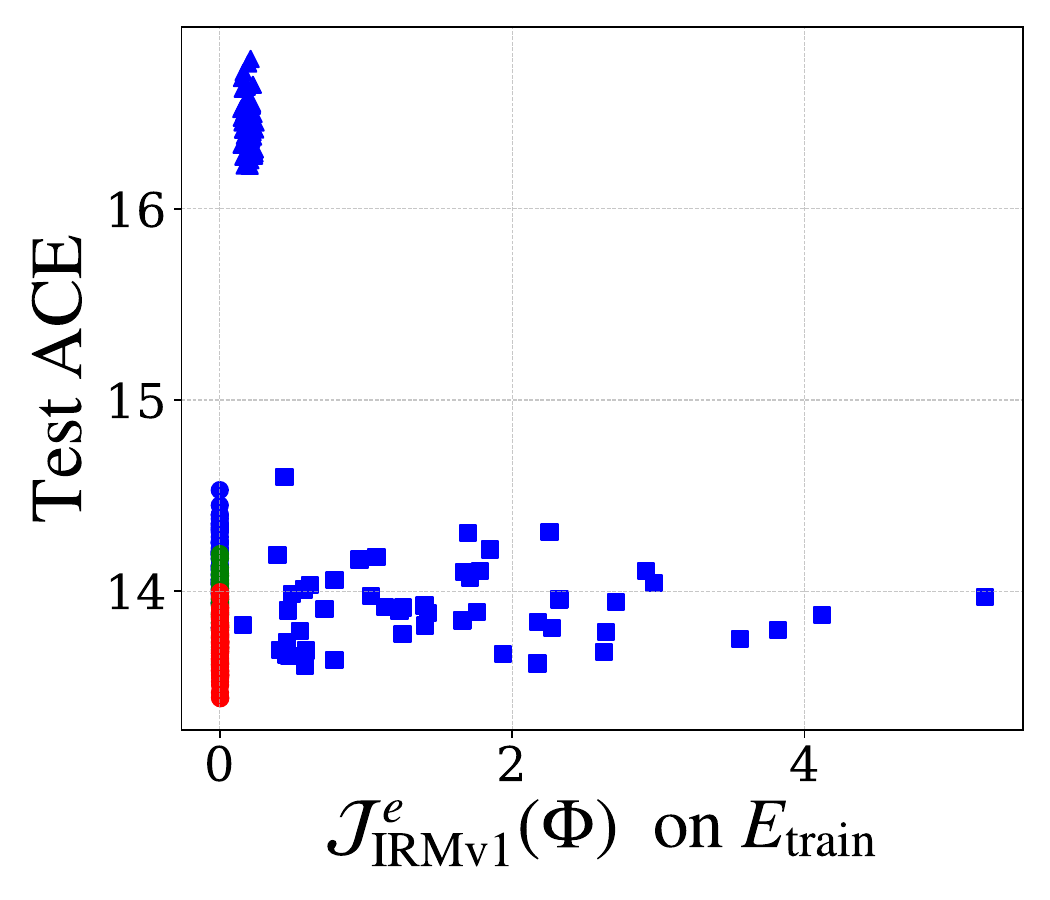}
    \label{fig:cobj_grad8}
  \end{subfigure}%
  \vspace{-5mm}
  \begin{center}
    \begin{tabular}{clclcl}
      \tikz \draw[blue,fill=blue] (0,0) circle (0.5ex); & IRMv1&
      \tikz \draw[blue,fill=blue] (0,0) rectangle (1ex,1ex); & BIRM&
      \tikz \draw[blue,fill=blue](0,0) -- (1ex,0) -- (0.5ex,0.86ex) -- cycle; & BLO\\
      \tikz \draw[matplotlibgreen,fill=matplotlibgreen] (0,0) circle (0.5ex); & v-IRMv1 (Ours)&
      \tikz \draw[red,fill=red](0,0)  circle (0.5ex); & mm-IRMv1 (Ours)\\
    \end{tabular}
  \end{center}

  \caption{ The relationship between the IRMv1 penalty values in the training environment and the
    corresponding test evaluation metrics (from left to right: accuracy, ECE, and ACE) on
    CMNIST. Each point represents the values recorded at each epoch during the last 50 epochs for
    each method. \TODO{Add analysis} }
  \label{fig:all_penalty}
  \vspace{-2mm}
\end{figure}


\end{document}